\theoremstyle{definition}
\newtheorem{definition}{Definition}[section]
\theoremstyle{definition}
\newtheorem{remark}{Remark}
\theoremstyle{definition}
\newtheorem{corollary}{Corollary}
\theoremstyle{definition}
\newtheorem{lemma}{Lemma}
\DeclareMathOperator*{\E}{\mathbb{E}}
\DeclareMathOperator*{\defeq}{\vcentcolon=}
\DeclareMathOperator*{\argmax}{\arg\max}
\DeclarePairedDelimiter{\norm2}{||}{||_2^2}
\def\mathcolor#1#{\@mathcolor{#1}}
\def\@mathcolor#1#2#3{%
  \protect\leavevmode
  \begingroup
    \color#1{#2}#3%
  \endgroup
}
\title{Instance-based Generalization in Reinforcement Learning}
\author{%
  Martin Bertran$\dagger$\thanks{$\dagger$ Equal contribution}\\
  Electrical and Computer Engineering\\
  Duke University\\
  \texttt{martin.bertran@duke.edu} \\
   \And
  Natalia Martinez$\dagger$\\
  Electrical and Computer Engineering\\
  Duke University\\
  \texttt{natalia.martinez@duke.edu} \\
 \And
  Mariano Phielipp\\
  AI Labs\\
  Intel Corporation\\
  \texttt{mariano.j.phielipp@intel.com } \\
   \And
  Guillermo Sapiro\\
  Electrical and Computer Engineering\\
  Duke University\\
  \texttt{guillermo.sapiro@duke.edu} \\
}
\begin{document}

\maketitle

\begin{abstract}
Agents trained via deep reinforcement learning (RL) routinely fail to generalize to unseen environments, even when these share the same underlying dynamics as the training levels. Understanding the generalization properties of RL is one of the challenges of modern machine learning. Towards this goal, we analyze policy learning in the context of Partially Observable Markov Decision Processes (POMDPs) and formalize the dynamics of training levels as \textit{instances}. We prove that, independently of the exploration strategy, reusing instances introduces signiﬁcant changes on the effective Markov dynamics the agent observes during training. Maximizing expected rewards impacts the learned belief state of the agent by inducing undesired instance-speciﬁc speed-running policies instead of generalizable ones, which are sub-optimal on the training set. 
We provide generalization bounds to the value gap in train and test environments based on the number of training instances, and use insights based on these to improve performance on unseen levels. We propose training a shared belief representation over an ensemble of specialized policies, from which we compute a consensus policy that is used for data collection, disallowing instance-speciﬁc exploitation. We experimentally validate our theory, observations, and the proposed computational solution over the CoinRun benchmark.
\end{abstract}

\section{Introduction}
\label{sec:introduction}
Deep Reinforcement Learning (RL) has enjoyed great success on a range of challenging tasks such as Atari \cite{badia2020agent57,bellemare2013arcade,hessel2018rainbow,mnih2013playing}, Go \cite{silver2016mastering,silver2017mastering}, Chess \cite{silver2018general}, and even on real-time competitive settings such as online games \cite{OpenAI_dota,vinyals2019alphastar}. A common model-free RL scenario consists of training an agent to interact with an environment whose dynamics are unknown with the objective of maximizing the expected reward. In many scenarios \cite{brockman2016openai,beattie2016deepmind,cobbe2018quantifying,chevalier2018babyai,barnes2019textworld,yuan2019interactive} the agent learns its behaviour policy by interacting with a finite number of game levels (instances), and is expected to generalize well to new, unseen levels sampled from the same model dynamics. This setting presents generalization challenges that share similarities with those of goal-oriented RL, where the agent is expected to generalize to new, related goals \cite{goyal2019infobot}. It has been observed that generalization gap between training levels and unseen levels can be significant \cite{cobbe2018quantifying,igl2019generalization,lee2020network,song2019observational}, especially when the environment has hidden information, such as the case of Partially Observable Markov Decision Processes (POMDP) \cite{aastrom1965optimal,monahan1982state, sutton2018reinforcement, kaelbling1998planning, hauskrecht2000value}. In this scenario, the agent only has access to observations instead of the full state of the environment; learning the optimal policy implicitly requires an estimation of the unseen state of the system at each time step, which is usually accomplished through an intermediary belief variable. Obtaining policies that generalizes well to new situations is a major open challenge in modern RL.

Here we provide a formulation to describe the generalization of agents trained on a finite number of levels sampled from an underlying POMDP, like in CoinRun \cite{cobbe2018quantifying}. We formalize training levels as \textit{instances}; these are functions sampled from an underlying Markov process that deterministically map a sequence of actions into a sequence of states, observations, and rewards. These level-specific functions mirror the role of samples in traditional supervised learning, and can provide insight on memorization of training dynamics, a form of overfitting present in model-free RL that is distinct from differences in the observation process \cite{lee2020network,song2019observational} and lack of exploration \cite{goyal2019infobot}. We prove that training over a finite set of instances changes the effective environment dynamics in such a way that the basic Markov property of the underlying model is lost. This phenomenon encourages an agent to encode information needed to identify a training instance and its specific optimal policy, instead of the intended level-agnostic optimal policy (which is optimal on the original POMDP dynamics). We can intuitively relate this to the difference between speed-running a level \cite{sher2019speedrunning}, where a human player has perfect memory of the environment dynamics, upcoming hazards, and the sequence of optimal inputs to reach the intended target with minimal delay; and the conservative strategy a player adopts when faced with a new level that only leverages environment dynamics, and is able to reach the goal even without knowledge of upcoming hazards. We identify two sources of information that can be used to identify the level, the instance-dependent observation dynamics (e.g., background theme in CoinRun), and the instance dynamics themselves (e.g., the $i$-th level is the only one in the training set that has two enemies before the first platform). The latter issue is challenging to address in the POMDP scenario, since past observations are needed to infer the true state of the environment, but can also be misused to infer the specific instance the agent is acting on.

Based on these challenges, we make the following main contributions to the theory and computational aspects of generalization in RL:
\begin{itemize}
    \item We formalize the concept of training levels as \textit{instances}, which are defined by deterministic mappings from actions to states, observations, and rewards sampled from the underlying Markov process of the environment. We show that this instance-based view is fully consistent with the standard POMDP formulation. Instances provide an exploration-agnostic abstraction of what is the optimal learnable policy on a finite set of levels when the goal is to maximize expected discounted rewards on unseen levels.
    
    \item We prove that learning from instances changes the underlying game dynamics in such a way that the optimal belief representation learned by the agent may fail to capture the true, generalizable dynamics of the environment; inducing policies that are optimal on the observed instance set, but fail to generalize to the overall dynamics, this is corroborated empirically. The instance-based formulation allows us to relate the generalization error of the policy value with the information the agent captures on the training instances using tools from generalization in supervised learning \cite{xu2017information, shalev2014understanding, russo2019much}. Our analysis provides theoretical backing to previous empirical observations in \cite{cobbe2018quantifying,igl2019generalization,lee2020network,song2019observational,cobbe2019leveraging}.

    \item As a step towards solving the issue of non-generalizeable policies, and making the learned policies instance-independent, we leverage the formulation of Bayesian multiple task sampling \cite{baxter1997bayesian} and propose using ensembles of policies that, while sharing a common intermediate representation, are specialized to instance subsets and then consolidated into a consensus policy. The latter is used to collect experience and discourage instance-specific exploits, and is also used on unseen environments. We provide experimental validation of our observations and proposed algorithmic approach using the CoinRun benchmark. Code available at \url{github.com/MartinBertran/InstanceAgnosticPolicyEnsembles}

\end{itemize}

\section{Related work}
\label{sec:relatedWork}

The exact mechanism by which neural networks are able to learn a policy that generalizes despite being able to fully memorize their training levels is an open research area \cite{belkin2019reconciling,zhang2016understanding}. Many of the recent RL works \cite{goyal2019infobot,igl2019generalization,lee2020network,song2019observational} are based in the Markov decision process (MDP) assumption, where the current observation contains sufficient information about the underlying state; this alleviates the memorization problem described here, but is insufficient for many scenarios which can otherwise be tackled via POMDPs \cite{guo2018neural,karkus2018particle,moreno2018neural,gregor2019shaping}. Some strategies to improve generalization have addressed observational overfitting, for example, by adding noise to the observations as in \cite{cobbe2018quantifying}, where Cutout augmentation \cite{devries2017improved} is applied, or by randomizing the features of the visual component of the agent network \cite{lee2020network}. The use of Information Bottlenecks \cite{shamir2010learning,tishby2015deep} to both reduce the amount of information captured from the observation and improve exploration has been advanced in \cite{goyal2019infobot,igl2019generalization}. We combine the results in \cite{xu2017information,shalev2014understanding,russo2019much,baxter1997bayesian} with the proposed instance-based learning paradigm to design a training scheme that reduces the generalization gap and improves the performance on unseen levels.

\vspace{-.05in}
\section{Preliminaries}
\label{sec:formulation}

\vspace{-.05in}
We consider a POMDP formulation, where an environment is defined by a set of unobserved states $s \in \mathcal{S}$, rewards $r \in \mathbb{R}$, possible actions $a \in \mathcal{A}$, observation modalities $k \in \mathcal{K}$, and observations $o \in \mathcal{O}$ that satisfy the following transition ($T$,\footnote{For brevity we denote $T(r,s\mid a_t,s_t)=p(r\mid a_t,s)p(s\mid a_t,s_t)$.},) and observation ($O$) distributions in time $t$:
\begin{equation}
\begin{array}{rl}
    s_0 \sim \mu,& o_0 \sim O(o|s_0,k), k \sim U_{|\mathcal{K}|},\\
    r_{t+1},s_{t+1} \sim  T(r,s\mid a_t,s_t),& o_{t+1} \sim O(o\mid a_{t},s_{t+1},k ).
\end{array}
\label{eq:POMDP}
\end{equation}
Here $s_0$ and $o_0$ are the initial state and corresponding observation, $\mu$ is the initial state distribution.
Given a discount factor $\gamma \in [0,1]$ and access to states $s$, the goal of an agent is to find the policy $\pi: \mathcal{S} \rightarrow \Delta^{\mathcal{A}}$ that maximizes the value function or expected return for every state:
\begin{equation}
\begin{array}{rl}
    \pi^*(a\mid s) = \argmax\limits_{\pi} V_\pi(s) = \argmax\limits_{\pi}  \E\limits_\pi[\sum_{i=1}^{T}\gamma^{i-1}r_{t+i}| s_t = s], \forall s \in \mathcal{S}.\\
\end{array}
\label{eq:val0}
\end{equation}
However, since states $s$ are not observable, an actionable policy can only depend on the past trajectories of observations, rewards, and actions. Throughout the rest of the text, we note 
$H_t = \{a_{0:t-1},s_{0:t},o_{0:t},r_{1:t}\}$ as the full $t$-step history of actions, states, observations, and rewards; we note $H^o_t = \{a_{0:t-1},o_{0:t},r_{1:t}\}$ to indicate that states are omitted. Under these conditions the objective is 
\begin{equation}
\begin{array}{rl}
    \pi^*(a\mid H^o_t) &= \argmax\limits_{\pi } V_\pi(H^o_t) = \argmax\limits_{\pi } \hspace{-.3in}\E\limits_{\substack{a_t \sim \pi(\cdot|H^o_t)\\ r_{t+1},o_{t+1} \sim p(\cdot|H^o_t,a_t) }}\hspace{-.2in}[r_{t+1} + \gamma V_\pi(H^o_{t+1})] , \forall H^o_t.\\
\end{array}
\label{eq:val}
\end{equation}
Note that $p(r_{t+1},o_{t+1}|H^o_t,a_t)$ (present in the value estimation) can be written as
\begin{equation}
\begin{array}{rl}
   p(r_{t+1},o_{t+1}|H^o_t,a_t) &=\sum\limits_{s_t, s_{t+1}}O(o_{t+1}|a_t,s_{t+1}) T(r_{t+1},s_{t+1}|a_t,s_t) p(s_t|H^o_t).
\end{array}
\label{eq:pcond}
\end{equation}

Here the distributions $O(o_{t+1}|a_t,s_{t+1})$ and $T(r_{t+1},s_{t+1}|a_t,s_t)$ are intrinsic to the the environment, while $p(s_t|H^o_t)$ characterizes the uncertainty on the true state of the environment, and is implicitly estimated in the process of improving the policy. The following remark formalizes the notion that $H^o_t$ is only useful insofar as it allows the estimation of the underlying state $s_t$, and motivates the introduction of the belief; a result that is well known in the literature \cite{monahan1982state, sutton2018reinforcement, kaelbling1998planning, hauskrecht2000value}. Meaning that there is an optimal policy that outputs the same action distributions for two observable histories with the same hidden state distribution $p(s_t\mid H^o_t)$; any encoding (belief) of the observed history that is unambiguous w.r.t. $p(s_t\mid H^o_t)$ can also be used.

\begin{remark} \textbf{\cite{kaelbling1998planning} Belief states.} Given a POMDP  and actionable policies $\pi(a \mid H^o_t)$; for any two observable histories $H^{o}_{t'}$, $H^{o}_{t''}$ such that $p(s_t\mid H^{o}_{t'}) =p(s_t\mid H^{o}_{t''})$ we have
\vspace{-.05in}
\label{lemma:beliefFunction}
\begin{equation}
    \begin{array}{rl}
      \max\limits_{\pi} V_\pi(H^o_{t'})&=\max\limits_{\pi}  V_\pi(H^{o}_{t''}),\\
      \exists \,\pi^*(a \mid  p(s\mid H^{o}_{t}))  & \text{such that} \, \pi^* \in \argmax V_\pi(H^o_t)\,, \forall H^o_t.\\
    \end{array}
    \label{eq:beliefBullets}
\end{equation}
Furthermore, for any belief function $b:H^o_t\rightarrow \mathcal{B}$ such that if $b(H^o_{t'})=b(H^{o}_{t''})$ then $p(s_t\mid H^o_{t'}) =p(s_t\mid H^{o}_{t''})$ we have
\vspace{-.15in}
\begin{equation}
    \begin{array}{rl}
      \max\limits_{\pi(a\mid H^o_t)} V_\pi(H^o_t)&=\max\limits_{\pi(a\mid b(H^o_t))} V_\pi(H^o_t),\, \forall H^o_t.
    \end{array}
    \label{eq:beliefBullets2}
\end{equation}
\end{remark}
\vspace{-.15in}
Moreover, $p(s_t|H^o_t)$ can be computed recursively, meaning that $\exists f:$  $p(s_t|H^o_t) = f(p(s_{t-1}|H^o_{t-1}),a_{t-1}, o_t,r_t)$, which motivates the following alternative problem:
\begin{equation}
\begin{array}{rl}
    b_t = f(b_{t-1},a_{t-1}, o_{t},r_{t}), & a_t \sim \pi^*(a|b_t),\\
    \pi^*, f^* =\argmax\limits_{\pi \circ f} V_\pi(b( H^0_t)),&  \forall H^0_t.
\end{array}
\vspace{-.02in}
\label{eq:bval}
\end{equation}

Note that $b_t \in \mathcal{B}$ is computed from the observed history, that is $b_t = b(H^o_t)$. The solution to Problem \ref{eq:bval} is a lower bound of Problem \ref{eq:val}, and if $b(\cdot)$ satisfies the condition in Remark \ref{lemma:beliefFunction} then equality is reached. This condition is satisfied in the particular case were $b(H^o_t) = H^o_t$, implying that a belief that encodes the observed history solves Problem \ref{eq:val}. If the conditional entropy $\textbf{H}(s|H^o_t) \rightarrow 0$ $\forall H^o_t$, the state of the system is determined by the trajectory and Problem \ref{eq:val} is equivalent to Problem \ref{eq:val0}.

\vspace{-.05in}
\section{Instance learning and generalization in RL}
\label{sec:instances}
\vspace{-.05in}
In many RL scenarios the agent learns its policy by interacting with a finite number of levels \cite{cobbe2018quantifying}; to model and analyze this scenario, we propose a dual formulation of the POMDP dynamics as instances. Instances are repeatable, non-random, action-sequence-dependent samples of the transition model. We define the generation process of an instance, and later analyze the transition dynamics an agent observes when interacting with a finite set of instances. We then show how the original POMDP dynamics are related to the instance set dynamics, and describe how the value of a policy over an instance set relates to the value of the same policy over the POMDP. 

\begin{definition} \textbf{Environment Instance.}
An instance $i$ is defined by a deterministic trajectory function that takes in a sequence of actions $a_{0:t-1} \in \mathcal{A}^{\otimes t}$ and outputs the sequence of visited states, observations, and rewards $\boldsymbol\tau^i_t: 
\mathcal{A}^{\otimes t}\rightarrow \big(\mathcal{S}^{\otimes t+1}, \mathcal{O}^{\otimes t+1}, \mathcal{R}^{\otimes t}\big)$. This function is defined by the following (recursive) generation process,
\begin{equation}
    \begin{array}{rl}
        s^i_0 \sim \mu, k^i \sim U_{|\mathcal{K}|}, o^i_0 \sim O(o|s^i_0,k^i),\\
         \boldsymbol\tau^i_t(a_{0:t-1}) \defeq \boldsymbol\tau^i_t(a_{0:t-2}) \oplus ( s^i_{t},o^i_{t},r^i_{t})|\boldsymbol\tau^i_{t-1}(a_{0:t-2}), a_{t-1},& \forall a_{0}^{t-1} \in \mathcal{A}^{\otimes t},\\
        (r^i_{t}, s^i_{t},o^i_t)|\boldsymbol\tau^i_{t-1}(a_{0:t-2}), a_{t-1} \sim T(r,s\mid a_{t-1},s^i_{t-1}) O(o|a_{t-1},s,k^i).
    \end{array}
    \label{eq:instanceT}
\end{equation}
 The initial states and observations ($s_0^i, o_0^i$) of an instance $i$ are action independent. Variable $k^i$ captures the randomness of the observation distribution for an instance $i$ and is assumed to have a finite support ($k^i \in \mathcal{K}: |\mathcal{K}|<\infty$).
 \label{def:instance}
\end{definition}

An instance provides consistent trajectories for any possible action sequence effected on  (i.e., if the agent tries the same action sequence twice on the same instance, it will get the same result), this is shown graphically in Figure \ref{fig:instances}. We can map all possible trajectories in an instance to a tree structure where each node is completely determined by the action sequence. The trajectory functions $\boldsymbol\tau^i_{t}$ are equivalent to samples in the supervised learning setting, and independent of the learned policy. 
\begin{wrapfigure}[15]{r}{0.41\textwidth}
\vspace{-.15in}
  \begin{center}
    \includegraphics[width=0.41\textwidth]{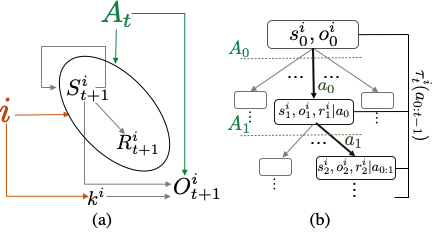}
  \end{center}
  \caption{(a) Dependency graph for instance function $i$ following Definition \ref{def:instance}. (b) Slice of instance trajectory tree $\boldsymbol\tau^i(\cdot)$ along action sequence $a_{0:t-1}$.}
  \label{fig:instances}
\end{wrapfigure}
An agent learns its policy from interacting with a finite set of instance transition functions $\boldsymbol\tau^i_{t}, i \in I$. On each episode, an instance $i$ is sampled uniformly from set $I$, the agent then interacts with this instance until the episode terminates (that is, instances are consistent along the episode and are sampled independently of the policy). The collected experience is then $(a_{0:T_{n}-1}, o^i_{0:T_{n}}, r^i_{1:T_{n}})$ where $T_n$ is the length of the episode, $a_{0:T_{n}-1}$ is the action sequence, and $o^i_{0:T_{n}}, r^i_{1:T_{n}}$ are the corresponding observations and rewards according to $\boldsymbol\tau^i_{t}$. Note that the agent does not observe the state directly.

If the action set and the episode length are both finite ($|\mathcal{A}| \le \infty, T_n\le \bar{T}$), a single instance can produce up to $|\mathcal{A}|^{\bar{T}}$ distinct trajectories. We will abuse the notation $\boldsymbol\tau^i_t(a_{0:t-1})$ to indicate both the full output of the instance function or the current node in the trajectory tree (see Figure \ref{fig:instances}.b).

\vspace{-.05in}
\subsection{Markov property of instance sets}
\vspace{-.05in}

We study how the state and reward transition matrix of a set of instances $I$ differs from the one corresponding to the base environment POMDP ($T(r,s|a_t,s_t)$ in Equation \ref{eq:POMDP}). From Eq. \ref{eq:instanceT} we observe that the future state and reward of an instance $i$ depends on the node of the transition tree $\boldsymbol\tau^i_t(a_{0:t-1})$ and action $a_t$. Given a history $H_t =\{a_{0:t-1},s_{0:t},o_{0:t},r_{1:t}\}$ the stochasticity in the transition matrix of future state and reward for an instance set $I$ ($T^I(r,s|H_t,a_t)$) is only due to the uncertainty in determining to what node of what instance $H_t$ was collected from,\footnote{Note that on the original POMDP we have $T(r,s|a_t,H_t) = T(r,s|a_t,s_t)$.} this can be written as

\vspace{-.03in}
\begin{equation}
    \begin{array}{rl}
        r_{t+1}, s_{t+1} \mid H_t, a_t,I & \sim T^I(r,s \mid  H_t, a_t),\\
        T^I(r,s \mid H_t,a_t) &= \E_{i|H_t,I}[T^i(r,s \mid  \boldsymbol\tau^i_t(a_{0:t-1}), a_t)],\\
        T^i(r,s \mid  \boldsymbol\tau^i_t(a_{0:t-1}), a_t) &= \delta\big((r,s)=r^i_{t+1}, s^i_{t+1}|\boldsymbol\tau^{i}_{t+1}(a_{0:t})\big).
    \end{array}
    \label{eq:instanceI}
\end{equation}
\vspace{-.1in}

This shows that for a particular instance $i$, the state and reward transition matrix ($T^i(.|H_t,a_t)$) defines a deterministic distribution and only depends on the full sequence of actions, not on the current state of the system, as would be the case in the true game dynamics. Moreover, over a finite set of independent instances $I$, the Markov property is not satisfied by just the action sequence, but by the entire $H_t$. The latter is used to implicitly infer which instances $i \in I$ are compatible with the observed history ($p(i|H_t,I)>0$, meaning that the trajectory function $i$ contains a branch that matches $H_t$). 

Fortunately, if we take expectation on the sets $I$ of independently generated instances $i$ we recover the transition model of the environment, which indicates that a sufficiently large instance set represents the true model dynamics, as shown in Lemma \ref{lemma:UniformAdmissibility}. All proofs are found in Section \ref{sec:appendix_proofs}; a glossary is provided in Section \ref{sec:appendix_glossary}.

\begin{lemma} \textbf{Expected instance transition matrix.} Given a set $I$ of $n$ independent instances $I \sim U_\mathbb{N}^{\otimes n}$, we have that $\forall H_t$ ($t<\infty$) compatible with $I$,
\label{lemma:UniformAdmissibility}
\begin{equation}
    \begin{array}{rl}
      \E_{I\mid H_t}[T^{I}(r,s|H_t,a_t)]= \E_{I\mid H_t}\E_{i\mid H_t, I}[T^{i}(r,s|H_t,a_t)]= T(r,s\mid a_t,s_t),&\forall r,s.
    \end{array}
    \label{eq:expectation}
\end{equation}
\end{lemma}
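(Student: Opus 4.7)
The plan is to unfold the definitions and reduce the problem to a single-instance statement, then invoke the Markov structure of the instance generative process. The first equality in Eq.~(\ref{eq:expectation}) is immediate by substituting the definition of $T^I$ given in Eq.~(\ref{eq:instanceI}) into the left-hand side; the content of the lemma is entirely in the second equality. For that, I first rewrite the iterated expectation as a single expectation over the joint posterior $p(I,i\mid H_t)$ on the instance set and the instance played in the current episode. This posterior is supported only on pairs where $i\in I$ and the branch $\boldsymbol\tau^i_t(a_{0:t-1})$ of $i$ matches $H_t$, since otherwise $p(H_t\mid I,i)=0$. For any such compatible pair, $T^i(r,s\mid H_t,a_t)$ equals the Dirac delta $\delta((r,s)=(r^i_{t+1},s^i_{t+1}))$ at the next-step realization at node $\boldsymbol\tau^i_{t+1}(a_{0:t})$.

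The key step is to compute the expectation of this delta over the randomness of the instance generative process, conditional on the prefix matching $H_t$. By Definition~\ref{def:instance}, each node of the trajectory tree is constructed by an \emph{independent} draw from the underlying POMDP kernels $T(r,s\mid a_{t-1},s^i_{t-1})\,O(o\mid a_{t-1},s,k^i)$, with parameters determined only by the parent node and the incoming action. Consequently, conditioning on the prefix $\boldsymbol\tau^i_t(a_{0:t-1})=H_t$ fixes only those kernels used to construct the prefix and leaves the kernel at the child node independent of $H_t$ beyond the parent state. The conditional distribution of $(r^i_{t+1},s^i_{t+1})$ given $H_t$ and $a_t$ is therefore exactly $T(r,s\mid a_t,s_t)$, so taking expectation of the Dirac delta yields $T(r,s\mid a_t,s_t)$ for every compatible $(I,i)$. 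Since this value does not depend on $(I,i)$, the outer expectation collapses to the same constant, giving the claimed identity.

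The main obstacle is cleanly handling the conditioning on the random set $I$: because $H_t$ can be compatible with several instances in $I$, the posteriors $p(I\mid H_t)$ and $p(i\mid H_t,I)$ are both nontrivial, and in particular $i$ is not independent of $H_t$. The cleanest workaround is to use Bayes' rule to write the joint posterior on $(I,i)$ explicitly and observe that the reduction above applies pair by pair, so the dependence structure of the posterior is irrelevant -- every summand in the outer expectation evaluates to the same $T(r,s\mid a_t,s_t)$. Once that is in place, no further computation is required beyond the Markov property of the per-instance generative process already encoded in Eq.~(\ref{eq:instanceT}).
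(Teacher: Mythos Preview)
Your argument is correct and follows the same route as the paper's proof: substitute the definition of $T^I$, merge the two expectations into a single posterior expectation, and use the Markov structure of the per-instance generative process in Eq.~(\ref{eq:instanceT}) to identify the conditional law of $(r^i_{t+1},s^i_{t+1})$ given $H_t$ with $T(\cdot\mid a_t,s_t)$. One wording caveat: the phrase ``every summand in the outer expectation evaluates to the same $T(r,s\mid a_t,s_t)$'' is slightly off, since for a \emph{fixed} realization $(I,i)$ the delta is a deterministic $0/1$; the paper sidesteps this by first marginalizing out $I$ (the delta depends only on $i$) and reading $\E_{i\mid H_t}[\delta]=T(r,s\mid a_t,s_t)$ directly.
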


We can conclude that the Markov property of the transition matrix given the full past trajectory goes from depending on the full action sequence (case of a particular instance $i$), to depending on the history (case of a finite set of independent instances $I$), and as the number of instances continues to grow, the transition matrix becomes indistinguishable from the original POMDP transition matrix that depends only on the previous state and action.

Although the transition matrix is not explicitly estimated in the model-free RL setting, it plays a central role in the process of optimizing the policy. Since we wish to generalize to unseen levels, we want to carefully ignore all policy dependencies that are not strictly state dependent, and ignore all extra information in $H_t$ that could be used to infer $i$ and $\boldsymbol\tau^i_t$. If we had access to the states $S$ directly, we could enforce the policies to only be state dependent. However, in a POMDP the states are unobserved, and we may need to capture temporal information in $H^o_t$ to infer $S$, leading to a conflict between inferring the state $S$ and not inferring the instance $i$ and its trajectory function. We formalize this in the following sections.

\vspace{-.05in}
\subsection{Value function and optimal policy}
\vspace{-.05in}
In a POMDP, an actionable policy can only depend on observed histories $H^o_t = \{a_{0:t-1},o_{0:t},r_{1:t}\}$. We define the value of a trajectory-dependent policy over a set of instances $V^I_\pi(H^o_t)$, and in Lemma \ref{lemma:UnbiasedEstimator} show that it is an unbiased estimator of the POMDP value function $V_\pi(H^o_t)$. 

\begin{definition} \textbf{Instance Value Function.} The value of an observable trajectory $H^o_t$ over a set of independently generated instances $I \sim U_\mathbb{N}^{\otimes n}$ according to policy $\pi: (\mathcal{A}^{\otimes t-1},\mathcal{O}^{\otimes t},\mathcal{R}^{\otimes t-1}) \rightarrow \mathcal{A}$ is
\begin{equation*}
\begin{array}{c}
    V^I_\pi(H^o_t)\defeq \hspace{-0.15in}\E\limits_{R_{t+1:T}| H^o_t, \pi, I}[\sum_{j=1}^{t-T} \gamma^{j-1}R_{t+j}]
    =\hspace{-0.15in}\E\limits_{\substack{A \sim \pi(\cdot\mid H^o_t)\\R,O \sim p(.|H^o_t,A,I)}}[R + \gamma V^I_\pi(H^o_t\oplus (A,O,R))].
\end{array}
\end{equation*}
The reward and observation expectations at time $t+1$ are taken over the distribution
\begin{equation*}
\begin{array}{c}
 p(r,o|H_t^o,A,I)=\sum_{i,\boldsymbol\tau_t,k}\sum_{s_{t+1}} O^i(o|s_{t+1},k,A,\boldsymbol\tau_t)T^i(s_{t+1},r|\boldsymbol\tau_t,A)p(i,\boldsymbol\tau_t,k|H_t^o,I).
\end{array}
\end{equation*}
Here $p(i,\boldsymbol\tau_t,k|H_t^o,I)$ is the joint distribution of instance $i$, trajectory $\boldsymbol\tau_t$, and observation variable $k$ conditioned on instance set $I$ and observed trajectory $H^o_t$. 
\end{definition}

\begin{lemma} \textbf{Unbiased value estimator.} Given a set $I$ of $n$ independent instances $I \sim U_\mathbb{N}^{\otimes n}$ and policy $\pi$, we have that $\forall H^o_t$ ($t<\infty$) compatible with $I$, $\E_{I\mid H^0_t}[V^I_\pi(H^o_t)] = V_\pi(H^o_t)$.
\label{lemma:UnbiasedEstimator}

\end{lemma}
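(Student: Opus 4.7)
The natural route is a backward induction on the remaining horizon $\bar T - t$, leveraging the recursive definition of $V^I_\pi$ and the POMDP Bellman equation. The base case is trivial: at the terminal step the value is zero under both measures. For the inductive step, I would write
\begin{equation*}
\E_{I\mid H^o_t}[V^I_\pi(H^o_t)] \;=\; \E_{I\mid H^o_t}\E_{\substack{A\sim\pi(\cdot|H^o_t)\\ R,O\sim p(\cdot|H^o_t,A,I)}}\!\bigl[R+\gamma V^I_\pi(H^o_t\oplus(A,O,R))\bigr],
\end{equation*}
and then swap the order of the $I$-expectation with the action/reward/observation expectation, exploiting that $A\sim\pi(\cdot\mid H^o_t)$ is independent of $I$ given $H^o_t$.

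The crux is a one-step identity analogous to Lemma \ref{lemma:UniformAdmissibility} but on the observable side: for any $H^o_t$ compatible with $I$ and any action $a$,
\begin{equation*}
\E_{I\mid H^o_t}\bigl[p(r,o\mid H^o_t,a,I)\bigr]\;=\;p(r,o\mid H^o_t,a),
\end{equation*}
where the right-hand side is the POMDP marginal that appears in Equation \ref{eq:pcond} (extended with the $k$-marginalization). I would derive this by plugging in the explicit form of $p(r,o\mid H^o_t,a,I)$ given in the definition, writing $p(i,\boldsymbol\tau_t,k\mid H^o_t,I)\propto p(\boldsymbol\tau_t,k\mid i)\,\mathbf 1[\boldsymbol\tau_t\text{ matches }H^o_t]$, and using that the $n$ instances are i.i.d.\ samples from the POMDP generative process of Definition \ref{def:instance}; the symmetry over $I$ together with the tower property collapses the instance-dependent transition and observation kernels into the original $T$ and $O$, mirroring the argument behind Lemma \ref{lemma:UniformAdmissibility}.

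Given this one-step marginal identity, the inductive step closes cleanly: after swapping expectations I can push $\E_{I\mid H^o_t, A, R, O}$ inside the $\gamma V^I_\pi(H^o_{t+1})$ term, observe that conditioning on $(H^o_t,A,R,O)$ is the same as conditioning on the extended history $H^o_{t+1}$, and invoke the inductive hypothesis $\E_{I\mid H^o_{t+1}}[V^I_\pi(H^o_{t+1})]=V_\pi(H^o_{t+1})$. Combined with the one-step identity for the immediate reward term, this collapses the right-hand side to the POMDP Bellman recursion (Equation \ref{eq:val}) evaluated at $H^o_t$, yielding $V_\pi(H^o_t)$.

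\textbf{Main obstacle.} The delicate point is the conditioning bookkeeping: $V^I_\pi(H^o_{t+1})$ depends on $I$ through the posterior $p(i,\boldsymbol\tau_t,k\mid H^o_{t+1},I)$, so one has to justify that $\E_{I\mid H^o_t,A,R,O}[\,\cdot\,]=\E_{I\mid H^o_{t+1}}[\,\cdot\,]$, i.e.\ that the sigma-algebras generated by $(H^o_t, A, R, O)$ and by $H^o_{t+1}$ coincide and that the i.i.d.\ instance-generation process respects this refinement. This amounts to verifying that the joint law of $(I, H^o_{1:\bar T})$ induced by sampling $I\sim U_\mathbb N^{\otimes n}$, picking an episode-instance uniformly, and rolling out $\pi$, factors so that marginalizing $I$ at every time step reproduces the POMDP law — essentially a consistency check that the instance-set formalism is exchangeable with the original POMDP under expectation, and precisely the hook that Lemma \ref{lemma:UniformAdmissibility} provides.
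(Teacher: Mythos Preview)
Your inductive argument is correct, but the paper takes a different and somewhat more direct route. Rather than recursing on the Bellman form of $V^I_\pi$, the paper expands $V^I_\pi(H^o_t)=\E_\pi[\sum_j\gamma^{j-1}R_{t+j}]$ and uses linearity of expectation to reduce the claim to showing $\E_{I\mid H^o_t}[\E[R_{t+j}\mid H^o_t,\pi,I]]=\E[R_{t+j}\mid H^o_t,\pi]$ for every fixed $j$. To handle all $j$ at once it first proves an intermediate result (a corollary of Lemma~\ref{lemma:UniformAdmissibility}) stating that the $n$-step joint transition on $(r,s,o)$ under any policy satisfies $\E_{I\mid H_t}[p((r,s,o)_{t+1:t+n}\mid H_t,\pi,I)]=T((r,s,o)_{t+1:t+n}\mid H_t,\pi)$; the passage from the full history $H_t$ to the observable one $H^o_t$ is then done by an explicit marginalization over $S_{0:t}$ inside the proof of the lemma itself.

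In effect, the paper front-loads the work into a multi-step corollary and avoids induction entirely, whereas you use only a one-step observable identity and let backward induction carry the rest. Your approach sidesteps the need for the $n$-step corollary and is arguably more modular; the price is the conditioning bookkeeping you flagged (justifying $\E_{I\mid H^o_t,A,R,O}=\E_{I\mid H^o_{t+1}}$ via the tower property), which the paper's linearity-based argument avoids since each $R_{t+j}$ term is handled in a single outer expectation over $I$. Both arguments ultimately rest on the same primitive---Lemma~\ref{lemma:UniformAdmissibility} plus the marginalization over hidden states---so the difference is organizational rather than conceptual.
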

Although for a given policy $\pi$  $V_\pi^I(H^o_t)$ is an unbiased estimator of $V_\pi(H^o_t)$, the underlying transition matrices $T^I$ and $T$ have significant differences when $|I|$ is not large enough, this is reflected in the optimal belief function that, for a finite set $I$, will tend to overspecialize; once we are learning on a specific set of instances, the function that processes histories can ignore information that is relevant for generalization, and expose information that is beneficial for improved performance on the set of instances. The following lemma shows that a belief that captures the state distribution (and thus generalizes to unseen instances) is potentially sub-optimal on the training instance set $I$.

\begin{lemma} \textbf{State belief sub-optimality.} Given a finite set of instances $I \sim U_\mathbb{N}^{\otimes n}$ and a belief function such that $b^I(H^o_t) = p(i,\boldsymbol\tau_t|H_t^o,I),  \forall H^o_{t}$:
\label{lemma:nonadmiss}
\begin{equation}
    \begin{array}{rl}
         \pi^I = \argmax\limits_{\pi(a\mid b^I(H^o_t))} V^I_\pi(b^I(H^o_t)) \in& \argmax\limits_{\pi(a\mid H^o_t)} V^I_\pi(H^o_t).  \\
    \end{array}
    \label{eq:belief-policyI}
\end{equation}
Moreover, a policy that depends on the generalizeable belief function $b(H^o_t)=p(s_{t}|H^o_t)$ is potentially sub-optimal for $I$. Conversely, the policy $\pi ^I$ is potentially sub-optimal for the true value function $V_{\pi}(H_t^o)$:
\begin{equation}
    \begin{array}{rl}
         \max\limits_{\pi(a\mid b^I(H^o_t))} V^I_\pi(b^I(H^o_t)) \ge& \max\limits_{\pi(a\mid b(H_t^o))} V^I_\pi(b(H_t^o)) ,\\
         \max\limits_{\pi(a\mid b(H^o_t))} V_\pi(b(H^o_t)) \ge& V_{\pi^I}(b^I(H_t^o)).
    \end{array}
    \label{eq:sub-optimality-policyI}
\end{equation}
\end{lemma}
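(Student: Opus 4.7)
The plan is to reduce all three parts of the lemma to applications of Remark~\ref{lemma:beliefFunction}, either to an augmented instance POMDP (for the $\argmax$ containment) or to the base POMDP (for the two inequalities). The key observation is that $b^I$ plays the role of a sufficient belief on an enlarged latent state $z_t=(i,\boldsymbol\tau_t,k^i)$, while $b$ plays the role of a sufficient belief on the base hidden state $s_t$; and for fixed $I$ the class of $b^I$-measurable policies is a subset of the $H^o_t$-measurable class, as is the class of $b$-measurable policies.

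For the containment in (\ref{eq:belief-policyI}), I would first observe that the reward--observation kernel $p(r,o\mid H^o_t,A,I)$ appearing in the definition of $V^I_\pi$ depends on $H^o_t$ only through $p(i,\boldsymbol\tau_t,k\mid H^o_t,I)$; because $k^i$ is deterministic given $i$ by Definition~\ref{def:instance}, this joint is in turn determined by $b^I(H^o_t)=p(i,\boldsymbol\tau_t\mid H^o_t,I)$. Viewing $z_t$ as the latent state of an augmented POMDP whose transition is the instance kernel $T^I$ of Eq.~\ref{eq:instanceI} and whose observation and reward laws are deterministic in $z_t$ and the action, the hypothesis of Remark~\ref{lemma:beliefFunction} is trivially met: $b^I(H^o_{t'})=b^I(H^o_{t''})$ forces $p(z_t\mid H^o_{t'},I)=p(z_t\mid H^o_{t''},I)$. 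Remark~\ref{lemma:beliefFunction} then yields $\max_{\pi(a\mid b^I)}V^I_\pi = \max_{\pi(a\mid H^o_t)}V^I_\pi$, from which (\ref{eq:belief-policyI}) follows immediately since every $b^I$-measurable policy is $H^o_t$-measurable.

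The two inequalities (\ref{eq:sub-optimality-policyI}) then drop out by policy-class containment. Any $b$-measurable policy is automatically $H^o_t$-measurable, so $\max_{\pi(a\mid b)}V^I_\pi \le \max_{\pi(a\mid H^o_t)}V^I_\pi = \max_{\pi(a\mid b^I)}V^I_\pi$, giving the first inequality. For the second, I would apply Remark~\ref{lemma:beliefFunction} directly to the base POMDP: $b(H^o_t)=p(s_t\mid H^o_t)$ trivially satisfies the sufficiency condition there, hence $\max_{\pi(a\mid b)}V_\pi = \max_{\pi(a\mid H^o_t)}V_\pi$; since $\pi^I$ is $H^o_t$-measurable once $I$ is fixed, its true-environment value is bounded above by this maximum.

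The main obstacle will be making the augmented-POMDP construction in the first step fully rigorous, since Remark~\ref{lemma:beliefFunction} is phrased for a POMDP whose hidden state is the base $s_t$. Once one verifies that the dynamics on $z_t=(i,\boldsymbol\tau_t,k^i)$ are genuinely Markovian (immediate from the deterministic action-indexed extension of $\boldsymbol\tau_t$ and from $k^i$ being a function of $i$) and that the corresponding belief recursion $b^I_{t+1}=f(b^I_t,a_t,o_{t+1},r_{t+1})$ exists, Remark~\ref{lemma:beliefFunction} applies verbatim and the remaining steps are routine bookkeeping about policy-class inclusions.
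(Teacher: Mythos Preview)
Your proposal is correct and follows essentially the same route as the paper: apply Remark~\ref{lemma:beliefFunction} to the instance-level process (where $(i,\boldsymbol\tau_t)$ serves as the effective hidden state) to obtain \eqref{eq:belief-policyI}, and then derive both inequalities in \eqref{eq:sub-optimality-policyI} from the resulting equalities together with the policy-class inclusions $\{\pi:f(H^o_t)\to\Delta^\mathcal{A}\}\subseteq\{\pi:H^o_t\to\Delta^\mathcal{A}\}$. The paper's argument is terser---it does not spell out the augmented-POMDP construction or the role of $k^i$---but the underlying logic is identical.
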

\vspace{-.05in}
As mentioned before, Lemma \ref{lemma:nonadmiss} shows that the learned policy and belief functions are prone to suboptimality. A simple example where the differences between these two policies can be large is shown in Section \ref{sec:appstatebeliefsubopt}; the empirical evidence in our experiments suggest that this phenomena also occurs elsewhere. We can apply the generalization results provided by \cite{xu2017information} to bound the expected generalization error of the value a policy trained on a set of instances $I$, this bound depends on the mutual information between the instance set and the learned policy. This shows that decreasing the dependence of a policy on the instance set tightens the generalization error.

\begin{lemma} \textbf{Generalization bound on instance learning}. For any environment such that $|V_\pi(H^o_t)|\le C/2, \forall H^o_t,\pi$,  for any instance set $I$, belief function $b$, and policy function $\pi(b(H^t_o))$, we have
\vspace{-.05in}
\begin{equation}
    \E_{I} |V^I_\pi(\emptyset)-V_\pi(\emptyset)|\le \sqrt{\frac{2C^2}{|I|}\times \text{MI}(I,\pi\circ b)},
\end{equation}
\noindent with $\emptyset$ indicating the value of a recently initialized trajectory before making any observation.
\label{lemma:gen}
\end{lemma}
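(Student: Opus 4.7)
The plan is to read this as a direct application of the Xu--Raginsky mutual-information generalization bound to the instance learning setting, with the learned object $\pi\circ b$ playing the role of the trained hypothesis and the set $I$ playing the role of the i.i.d. training sample. The three things I must supply are: (i) a decomposition of $V^I_\pi(\emptyset)$ as an empirical mean over $I$, (ii) a sub-Gaussianity / boundedness condition on each term, and (iii) identification of the population mean with $V_\pi(\emptyset)$ via Lemma \ref{lemma:UnbiasedEstimator}.

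First I would unfold the definition of $V^I_\pi(\emptyset)$. At the empty history nothing has been observed, so the instance posterior degenerates to the uniform prior on $I$, i.e.\ $p(i\mid \emptyset, I) = 1/|I|$ for each $i\in I$. Plugging this into the definition of the instance value function, I obtain
\begin{equation*}
V^I_\pi(\emptyset) \;=\; \frac{1}{|I|}\sum_{i\in I} V^i_\pi(\emptyset),
\end{equation*}
where $V^i_\pi(\emptyset)$ is the value of policy $\pi$ on a single instance (i.e.\ with $|I|=1$ and $I=\{i\}$). Lemma \ref{lemma:UnbiasedEstimator} applied to a singleton set then gives $\E_{i\sim U_\mathbb{N}}[V^i_\pi(\emptyset)] = V_\pi(\emptyset)$, so the right-hand side is a genuine empirical mean whose population counterpart is the target quantity.

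Next, the boundedness hypothesis $|V_\pi(H^o_t)|\le C/2$ applies in particular to the single-instance-induced value, giving $|V^i_\pi(\emptyset)|\le C/2$. A random variable supported in an interval of length $C$ is $\sigma$-sub-Gaussian with $\sigma^2\le C^2/4$, which is the hypothesis needed by Theorem~1 of Xu and Raginsky \cite{xu2017information}: for a random hypothesis $W$ and i.i.d.\ sample $S = (Z_1,\ldots,Z_n)$ with $\ell(w,Z)$ $\sigma$-sub-Gaussian in $Z$,
\begin{equation*}
\Bigl|\E\bigl[\tfrac{1}{n}\textstyle\sum_i \ell(W,Z_i)\bigr] - \E_{W,Z'}[\ell(W,Z')]\Bigr| \;\le\; \sqrt{\tfrac{2\sigma^2}{n}\,\mathrm{MI}(W;S)}.
\end{equation*}
Taking $W = \pi\circ b$, $S = I$ with $n = |I|$, and $\ell(W,Z_i) = V^{Z_i}_\pi(\emptyset)$, the left-hand side is exactly $\E_I|V^I_\pi(\emptyset) - V_\pi(\emptyset)|$ and the right-hand side becomes $\sqrt{2C^2\,\mathrm{MI}(I;\pi\circ b)/|I|}$ after using $\sigma^2 \le C^2/4$ together with the loose constant the statement allows.

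The main obstacle I expect is the careful handling of the second step: the clean empirical-mean decomposition is only immediate at $t=0$, where no observation has yet been used to disambiguate which $i\in I$ generated the trajectory; at any later $t$ the posterior $p(i\mid H^o_t,I)$ would be nontrivial. The statement is phrased at $\emptyset$ precisely to sidestep this, so I would be explicit about that. A smaller subtlety is that the Xu--Raginsky bound treats $W$ as a hypothesis that may be learned from $S$; the lemma leaves the training procedure implicit, but all that matters for the bound is the joint distribution of $(I,\pi\circ b)$, and thus $\mathrm{MI}(I,\pi\circ b)$ is the correct controlling quantity regardless of how the training was performed.
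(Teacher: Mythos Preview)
Your proposal is correct and follows essentially the same approach as the paper's proof, which is a terse three-line sketch: boundedness of the value implies sub-Gaussianity, Lemma~\ref{lemma:UnbiasedEstimator} identifies the population quantity, and then Theorem~1 of \cite{xu2017information} is invoked directly. Your version is in fact more explicit than the paper's, since you spell out the empirical-mean decomposition $V^I_\pi(\emptyset)=\frac{1}{|I|}\sum_{i\in I} V^i_\pi(\emptyset)$ at the empty history and apply Lemma~\ref{lemma:UnbiasedEstimator} to singletons, details the paper leaves implicit.
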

\vspace{-.05in}
The results in lemmas \ref{lemma:nonadmiss} and \ref{lemma:gen} suggest that the objective of maximizing policy value over the training set is misaligned with the true goal of learning a generalizeable policy; this generalizeable policy is sub-optimal in the sense that it can be improved with knowledge of the specific instance to which it is being applied to. In the following section we propose a method to tackle this problem. The experiments and result section provide empirical backing to the observations made so far.

\vspace{-.05in}
\section{Instance agnostic policy with ensembles}
\label{sec:ensembles}
\begin{wrapfigure}[21]{r}{0.42\textwidth}
    \vspace{-.25in}
    \includegraphics[width=0.42\textwidth]{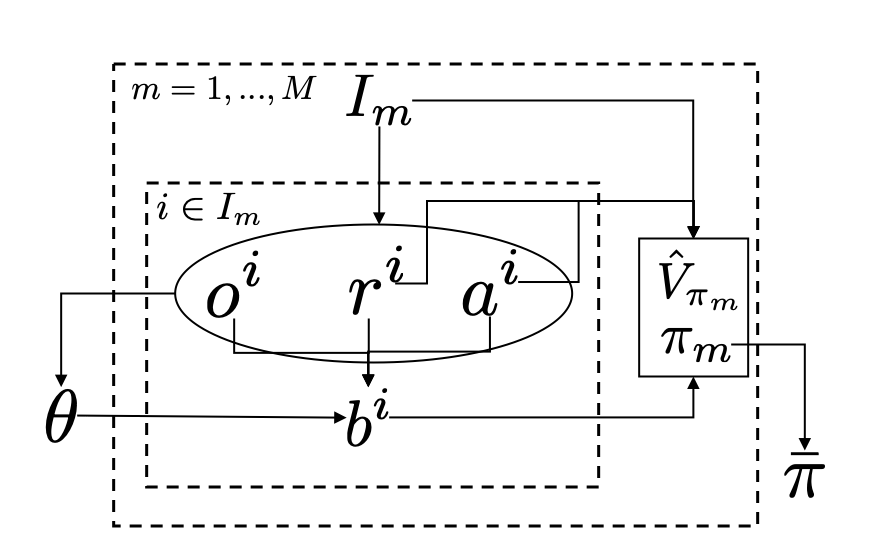}
  \caption{Simplified learning dependency scheme for the instance-agnostic policy ensembles approach. Variables $o^i,r^i,a^i$ represent collected observations, rewards and actions for an instance $i$ in subset $I_m$ (interdependecies are omitted). Shared parameter $\theta$ is learned from all instances, subset-specific policies and values ($\pi_m, \hat{V}_{\pi_m}$) are learned from rewards, actions, and shared representation ($b^i$) within each instance $i \in I_m$. The agnostic policy $\bar{\pi}$ is the average over $\pi_m$.}
  \label{fig:ensambles}
\end{wrapfigure}
Our goal is to learn a policy that generalizes to multiple instances drawn from the same underlying dynamics. Ideally, this policy should capture the state distribution given the observed trajectory, $p(s_t|H^o_t)$. However, since states are unknown and training is done on a finite instance set, the belief representation may attempt to encode a distribution across instance-specific trajectories instead, as shown in Lemma \ref{lemma:nonadmiss}. This phenomenon occurs because maximizing the value objective over the training set encourages speedrun-like policies.

We formulate a simple training scheme where we split our instances into subsets $\{I_m\}_{m=1}^{{M}}$. We assume these are large enough to contain a representative sample of possible hidden state transitions of the underlying model; this assumption is made so that each instance subset is potentially able to learn a state dependent policy that generalize well. In practice, we consider each distinct environment configuration and starting condition to be a distinct instance.

Following the Bayesian and information theoretic multiple task sampling formulation in \cite{baxter1997bayesian}, the agent has a shared representation $b_\theta \in \mathbb{R}^{|\mathcal{B}|}$, described by a learnable recursive function $f_{\theta}:\mathcal{O}\times \mathbb{R}\times\mathcal{A} \times \mathbb{R}^{|\mathcal{B}|} \rightarrow \mathbb{R}^{|\mathcal{B}|}$ parametrized by $\theta$ such that $b_{\theta,{t}} = f_{\theta}(o_t,r_t,a_{t-1},b_{\theta,{t-1}})$. This shared representation is followed by an instance-subset-specific policy function $\pi_{m}:\mathbb{R}^{|\mathcal{B}|} \rightarrow \Delta^{|\mathcal{A}|}, \forall m$, which encourages part of the eventual instance-specific specialization to occur at the policy level, instead of on $f_{\theta}$. We also estimate the subset-specific value function $V_{\pi_m}$ with $\hat{V}_{\pi_m}:\mathbb{R}^{|\mathcal{B}|} \rightarrow \mathbb{R}, \forall m$, required for policy improvement.

We define the consensus policy $\bar{\pi}(a|b_\theta) \defeq \sum_{m}\pi_{m}(a|b_\theta)/M$ as the average over the subset policies evaluated on the shared representation; this policy is used to collect training trajectories, and is also the policy used for any new, unseen level. The joint learning objective is
\begin{equation}
    \begin{array}{c}
      \max_{\theta,\{\pi_m\}} \frac{1}{M} \sum^M_{m=1}\frac{1}{|I_m|} \sum_{i\in Im} \E_{H^o_t|i,\bar{\pi}}\big[\hat{V}_{\pi_m}(b_{\theta,t}|H^o_t)\big]- \lambda \norm2{\theta},\\
      \\
      \min_{\theta,\{\hat{V}_{\pi_m}\}} \E_{H^o_t|i,\bar{\pi}}\big[\norm2{\hat{V}_{\pi_{m}}(b_{\theta,t}|H^o_t) - V_{\pi_m}(H^o_t)}\big],
    \end{array}
    \label{eq:objective}
\end{equation}
where $V_{\pi_{m}}(H^o_t)$ denotes the true policy value, and is approximated through sampled returns (see Section \ref{sec:Techniques}). A simplified illustration of how these variables are related is shown in Figure \ref{fig:ensambles}. Using the consensus policy for data collection may prevent, as confirmed in our experiments, the exploitation of speedrun-like trajectories during training (e.g., taking a running jump with the foreknowledge that no hazard will be present at the landing location), unless these actions generalize well across instance subsets. A shared representation is consistent with the POMDP formulation and encourages knowledge transfer between different observational distributions (parameter $k$ in Figure \ref{fig:instances}), leading to a representation that better captures the underlying state dynamics in its decision-making process. Adding an $\ell_2$ prior to representation parameter $\theta$, combined with the instance-set-specific policies $\pi_m$, disincentives the representation from encoding instance-specific information. Reducing the information the representation captures about the instance set also promotes generalization as seen in \cite{xu2017information} and shown in Lemma \ref{lemma:gen}. To make use of samples taken from the consensus policy $\bar{\pi}$ on instance specific policies and values, we use importance weights and perform off-policy policy gradients; implementation details are provided in Section \ref{sec:Techniques}.

\vspace{-.12in}
\section{Experiments and results}
\vspace{-.1in}
The goal of the experiments presented next is to support the theoretical foundations presented in Section \ref{sec:formulation}. Such theory explains well known empirical observations and motivates the algorithm in Section \ref{sec:ensembles}. These experiments stress how conscientious modelling of our usual training processes opens the door to a better understanding of RL and the potential development of new computational tools. We use the popular CoinRun environment \cite{cobbe2018quantifying}, a 2D scrolling platformer with $7$ distinct actions, where the agent observes a $64\times64$ RGB image and the only source of non-zero reward is when it reaches a golden coin at the end of the level, with a reward of $10$. Following \cite{cobbe2018quantifying}, we analyze the generalization benefits of a baseline method only using BatchNorm \cite{ioffe2015batch} (base), additional $\ell_2$ regularization ($\ell_2$), and Cutout \cite{devries2017improved} ($\ell_2$-CO). We compare these with the performance of our proposed \textit{instance agnostic policy ensemble} (IAPE) method, and a model trained on an unbounded level set ($\infty$-levels), considered as the gold standard in terms of generalization. We use the Impala-CNN architecture \cite{espeholt2018impala} followed by a single $256$-unit LSTM, policy and value functions are implemented as single dense layers. Agents are trained on a set of $500$ levels for a total of $256M$ frames; testing is done on an unbounded level set different from the one used to train the $\infty$-levels agent. Details on implementation, evaluation, and extended results are provided in Section \ref{sec:appendix_results}.

\subsection{Generalization performance}

We evaluate the performance of the described methods and the proposed IAPE by measuring the episode return ($R$), the difference between time-to-reward on successful instances per level w.r.t the base model ($\Delta T_{base}|R=10$), as well as the per-instance KL-divergence between the time-averaged policy of each method and the one obtained on the unbounded training levels ($D^i_{kl}({\pi_{\infty}}|{\pi})$). The $\Delta T_{base}|R=10$ is an indicator of how speed-run-like policies are when compared to the baseline policy, with higher values indicating more cautious agents.  Lower values in the $D^i_{kl}({\pi_{\infty}}|{\pi})$ metric may indicate that the learned policy is closer to the unbounded level policy.

Table \ref{table:OverallTable} summarizes these results, as expected the baseline model has the highest train return and lowest test return. Moreover, the empirical distribution of $\Delta T_{base}|R=10$ is positive for every method, which is consistent with the baseline model learning speed-running policies. We show these empirical distributions in Supplementary Material, in most cases these are positively skewed, this is most evident on training levels and for the $\infty$-levels policy. The baseline model differs considerably in terms of $D^i_{kl}({\pi_{\infty}}|{\pi})$ in both training and validation, indicating that this policy differs significantly from the ideal ($\infty$-levels) model, this follows from our theoretical analysis. 

Adding different types of regularization generally prevents overfitting and improves rewards on unseen levels. This also reduces the average divergence between these policies and the unbounded level policy ($D^i_{kl}({\pi_{\infty}}|{\pi})$), indicating that the regularized policies are closer to the desired one. The proposed IAPE model has the best performance both in terms of rewards and distance to the optimal policy $D^i_{kl}({\pi_{\infty}}|{\pi})$. Additionally, Figure \ref{fig:ensembleDistributions}.a shows how each model adapts to new training levels, and how does their performance on the previous training and test set is affected. IAPE is consistently better in maintaining performance on old levels and specializing to the new dataset.

\begin{table}[h!]

\caption{\small Performance comparison on CoinRun benchmark, average episode length varies between 45 to 50 frames. IAPE is the best performing method on test levels; $\infty$-levels is the target policy.}
\label{table:OverallTable}
\centering
\small
\begin{tabular}{lcccc|c}
\toprule
{Method} &base & $\ell^2$ & $\ell^2$-CO & IAPE  & $\infty$-levels  \\
\midrule
$R$ train & \textbf{9.74$\pm$.09} & 9.56$\pm$.14& 9.36$\pm$.15 & 9.54$\pm$.13 &  9.14$\pm$.14 \\
$D^i_{kl}({\pi_{\infty}}|{\pi})$ train &   0.57$\pm$0.39 &   0.2$\pm$0.16 &   0.15$\pm$0.12 &   \textbf{0.11$\pm$0.12} &     - \\
$\Delta T_{base}|$R=10 train &   - &   1.34$\pm$7.94 &    3.3$\pm$8.79 &   2.73$\pm$8.15 &     4.65$\pm$8.39  \\
\midrule
$R$ test &  7.47$\pm$.25 &  7.49$\pm$.22 &   7.99$\pm$.20 &    \textbf{8.23$\pm$.18} &        \textbf{9.05$\pm$.23} \\
$D^i_{kl}({\pi_{\infty}}|{\pi})$ test &   0.57$\pm$0.39 &   0.19$\pm$0.15 &    0.15$\pm$0.11 &    \textbf{0.1$\pm$0.09} &       -\\
$\Delta T_{base}|$R=10 test &   - &   1.73$\pm$12.71 &    0.45$\pm$11.54 &    1.14$\pm$12.28 &      1.51$\pm$12.49  \\

\bottomrule
\end{tabular}
 \vspace{-1.2em}
\end{table}

\subsection{Ensemble analysis}

We compare how the exploration strategy in IAPE affects its learned parameters and generalization capabilities by comparing against a similar ensemble baseline (EB) paradigm were the experience for each instance subset is collected with its specific policy (and not the agnostic policy). We measure the difference in the instance-subset-specific parameters with pairwise cosine similarity between weight matrices of different ensembles for both policy and value layer parameters. 

\begin{figure}[h]%
\centering
\includegraphics[width=.8\linewidth]{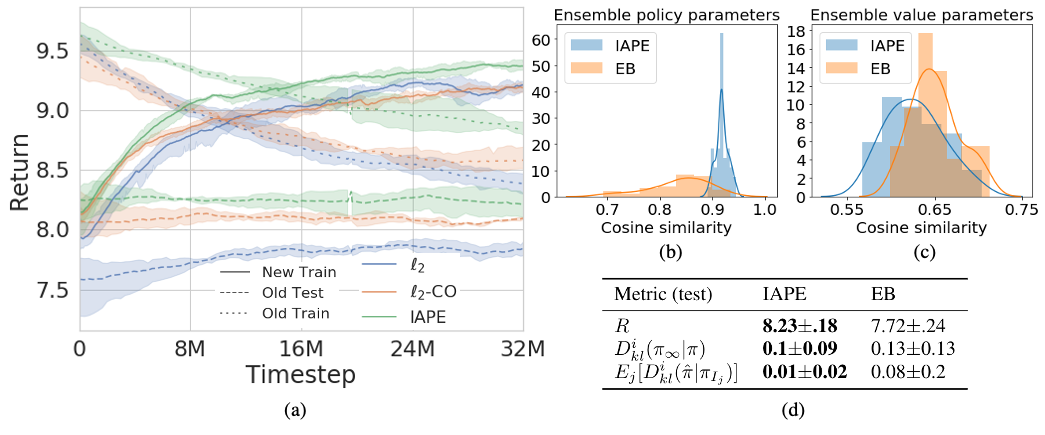}
  \caption{a) Continual learning performance of $\ell_2$, $\ell_2$-CO and IAPE on a new set of $500$ levels, trained for $64M$ timesteps, performance on old training and test levels is also shown. b) \& c) Cosine similarity between policy and value parameters respectively for IAPE and EB. d) Performance comparison.}%
 \label{fig:ensembleDistributions}%
\end{figure}

Figures \ref{fig:ensembleDistributions}.b and \ref{fig:ensembleDistributions}.c show the cosine similarity distribution between policy and value parameters across ensembles for the proposed IAPE and EB. Similarity between policy parameters is well concentrated around $0.92$ for the IAPE method, something that is not true on the EB approach. Since the only difference between these two is how the experience is collected during training, it is reasonable to conjecture that this data collection procedure regularizes the subset-specific policies to be similar to the agnostic policy. Value-specific parameters in Figure \ref{fig:ensembleDistributions}.c show higher diversity across both methods; this is expected since values may differ significantly between instances, regardless of policy. Table \ref{fig:ensembleDistributions}.d shows that IAPE has better test-time performance than EB, being closer to the unbounded level policy. It also shows greater agreement between instance-specific policies and its own policy average, measured as the average KL divergence between each instance subset and its corresponding agnostic policy ($E_{j}[D^i_{kl}({\hat{\pi}}|{\pi_{I_j}})]$), this result is consistent with the one shown in Figure \ref{fig:ensembleDistributions}.b.

\vspace{-.1in}
\section{Discussion}
We introduced and formalized instances in the RL setting; instances are deterministic mappings from actions to states, observations, and rewards consistent with the standard POMDP formulation. We showed that instances directly impact the effective state space of the environment the agent learns from, which may induce any agent to fail to capture the true, generalizable dynamics of the environment. This problem is exacerbated in the POMDP scenario, where we use past observations to infer the current state, but those can also induce overfitting to the training instances. We show that the optimal policy for a finite instance set is a speed-run one that learns to identify the instance, memorizing the optimal sequence of actions for its specific trajectory mapping. This policy differs from the desired one, which should only capture the underlying dynamics of the environment. 

We empirically compare regularization methods with the proposed IAPE, and show that results align with the presented theory. In particular, the policy of an agent trained naively on a finite number of levels considerably differs from one that is trained on an unbounded level set; this gap is reduced with the use of regularization. The proposed instance agnostic policy ensemble showed promising results on unseen levels. Moreover, we showed that collecting experience from the agnostic policy during training had a large positive impact in the performance of the method.

There are significant theoretical and practical benefits to be obtained from explicit consideration of the difference between training set dynamics and environment dynamics. Future work should focus on more targeted methods to account for these differences in dynamics via regularization.

\section*{Broader Impact}
Understanding generalization properties in reinforcement learning (RL) is one of the most critical open questions in modern machine learning, with implications ranging from basic science to socially-impactful applications. Towards this goal, we formally analyze the training dynamics of RL agents when environments are reused. We prove that this standard RL training methodology introduces undesired changes in the environment dynamics, something to be aware of since it directly impacts the learned policies and generalization capabilities. We then introduce a simple computational methodology to address this problem, and provide experimental validation of the theory presented. Beyond the scope of this paper, deep reinforcement learning has multiple human-facing applications, but is notoriously data inefficient and more generalization understanding is needed. Addressing these fundamental problems and providing a foundational understanding is critical to increase its real-world applicability in fields such as robotics, chemistry, and healthcare. This work is a step in this direction, with building blocks that will encourage and facilitate future critical developments.

\section*{Acknowledgements}
Work partially supported by NSF, NGA, ARO, ONR, and gifts from Cisco, Google, Amazon, Microsoft, and Intel AI Lab.

\bibliographystyle{unsrt}
\bibliography{cites.bib}

\clearpage
\appendix
\section{Supplementary Material}
\label{sec:appendix}

\subsection{Proofs}
\label{sec:appendix_proofs}
\textbf{Lemma \ref{lemma:UniformAdmissibility}. Expected instance transition matrix.} Given a set $I$ of $n$ independent instances $I \sim U_\mathbb{N}^{\otimes n}$, we have that $\forall H_t$ ($t<\infty$) compatible with $I$,
\begin{equation}
    \begin{array}{rl}
      \E_{I\mid H_t}[T^{I}(r,s|H_t,a_t)]= \E_{I\mid H_t}\E_{i\mid H_t, I}[T^{i}(r,s|H_t,a_t)]= T(r,s\mid a_t,s_t),&\forall r,s.
    \end{array}
    \label{eq:expectationApp}
\end{equation}
\begin{proof}
We observe that 
\begin{equation}
    \begin{array}{rl}
      \E_{I\mid H_t}[T^{I}(r,s|H_t,a_t)]&= \E_{I\mid H_t}[\E_{i\mid H_t, I}[T^{i}(r,s|H_t,a_t)]],\\
      &= \E_{I\mid H_t}[\E_{i\mid H_t, I}[\delta((r^{i},s^{i})=(r,s)|H_t,a_t,i)]],\\
        &=\E_{i\mid H_t}[\delta((r^{i},s^{i})=(r,s)|H_t,a_t,i)],\\
      &= \E_{(r^{i},s^{i})\mid H_t}[\delta((r^{i},s^{i})=(r,s)|H_t,a_t)],\\
      &= T(r,s\mid a_t,s_t),\\
    \end{array}
    \label{eq:expectationApp2}
\end{equation}
where the first equality is taken from Equation \ref{eq:instanceI} and we leveraged the construction process of the transition matrix for the instance $T^i$ and the basic property $\E[\delta(x=x_0)]=P(x=x_0)$.

\end{proof}

The following corollary (Corollary \ref{corollary1}) states that the expectation over the instances of the \textit{instances-specific} probability distribution of future rewards, states and observations for every past history $H_t$ and policy $\pi$ ($p((r,s,o)_{t+1:t+n}|H_t,\pi,I)$), is the transition distribution of the future rewards, states and observations of the environment $T((r,s,o)_{t+1:t+n}|H_t,\pi)$. This result will be used to prove Lemma \ref{lemma:UnbiasedEstimator}.

\begin{corollary} Given a set $I$ of $n$ independent instances $I \sim U_\mathbb{N}^{\otimes n}$, a policy $\pi(a\mid H^t)$, and a time horizon $n$, we have that $\forall H_t$ ($t<\infty$) compatible with $I$
\label{corollary1}
\begin{equation}
    \begin{array}{l}
      \E_{I\mid H_t}[p((r,s,o)_{t+1:t+n}|H_t,\pi,I)]= T((r,s,o)_{t+1:t+n}|H_t,\pi),\,\forall r,s,\\
      T((r,s,o)_{t+1:t+n}|H_t,\pi) \defeq \hspace{-0.2in} \int\limits_{a_{t:t+n-1}}\hspace{-0.2in} \mathop{\Pi}_{j=0}^{n-1} \pi(a_{t + j} \mid  H_{t + j}) T((r, s, o)_{t + j + 1}\mid s_{t + j} , a_{ t + j}).\\
    \end{array}
    \label{eq:expectationCorollaryApp}
\end{equation}
Where (marginalizing over observational variable $k$ in Equation \ref{eq:POMDP} for simplicity) we have
\begin{equation}
      T((r, s, o)_{t + j + 1}\mid s_{t + j} , a_{ t + j}) \defeq T((r, s)_{t + j + 1}\mid s_{t + j} , a_{ t + j})O(o_{t + j + 1} \mid a_{t + j}, s_{t + j + 1}).
\end{equation}

\end{corollary}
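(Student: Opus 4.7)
The plan is to prove Corollary \ref{corollary1} by induction on the time horizon $n$, using Lemma \ref{lemma:UniformAdmissibility} as the base case and a Bayes-rule decomposition for the inductive step, which reduces the multi-step instance expectation to a chain of single-step Lemma \ref{lemma:UniformAdmissibility} applications.

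For the base case $n=1$, I would extend Lemma \ref{lemma:UniformAdmissibility} to account for the observation emission. Since $O(o\mid a_t, s_{t+1}, k^i)$ depends on the instance only through $k^i$, which is already part of the instance sample and is marginalized inside the definition of $T((r,s,o)_{t+1}\mid s_t,a_t)$ stated in the corollary, the argument of Lemma \ref{lemma:UniformAdmissibility} applied to the joint delta over $(r^i, s^i, o^i)_{t+1}$ gives $\E_{I\mid H_t}[T^I((r,s,o)_{t+1}\mid H_t, a_t)] = T((r,s,o)_{t+1}\mid s_t, a_t)$. Multiplying by $\pi(a_t\mid H_t)$ and summing (or integrating) over $a_t$ recovers the stated integral form at $n=1$.

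For the inductive step, assume the result for $n-1$. Splitting the joint $p((r,s,o)_{t+1:t+n}\mid H_t,\pi,I)$ via the one-step chain rule and taking expectation over $I\mid H_t$ gives
\begin{equation*}
\sum_{a_t,(r,s,o)_{t+1}} \pi(a_t\mid H_t)\, \E_{I\mid H_t}\!\left[T^I((r,s,o)_{t+1}\mid H_t, a_t)\cdot p((r,s,o)_{t+2:t+n}\mid H_{t+1}, \pi, I)\right].
\end{equation*}
The key move is a Bayes-rule factorization of the inner expectation: since $P(I\mid H_t)\,T^I(y\mid H_t, a_t) = P(y\mid H_t, a_t)\,P(I\mid H_{t+1})$ with $H_{t+1}$ the concatenation of $H_t$, $a_t$, and $y=(r,s,o)_{t+1}$, the inner expectation equals $T((r,s,o)_{t+1}\mid s_t,a_t)\cdot \E_{I\mid H_{t+1}}[p((r,s,o)_{t+2:t+n}\mid H_{t+1}, \pi, I)]$, using the base case for the first factor and the inductive hypothesis at history $H_{t+1}$ for the second. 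Collecting the resulting product and rewriting the iterated sums over actions as an integral over $a_{t:t+n-1}$ yields the stated form.

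The main obstacle will be the Bayes-rule factorization, which demands that the posterior $P(I\mid H_t, a_t, (r,s,o)_{t+1})$ coincide with $P(I\mid H_{t+1})$. This should follow because $a_t$ is drawn from a policy depending only on $H_t$ (hence is $I$-independent given $H_t$), and $H_{t+1}$ is the deterministic concatenation of $H_t$, $a_t$, and $(r,s,o)_{t+1}$, so both posteriors condition on the same $\sigma$-algebra. A secondary point is ensuring that $H_{t+1}$ remains compatible with $I$ almost surely under the expectation so that the inductive hypothesis applies; this is immediate since the event of incompatibility has measure zero inside the expectation. Once these measure-theoretic technicalities are handled, the recursive application of Lemma \ref{lemma:UniformAdmissibility} at each time step cleanly yields the factorization into single-step POMDP transitions.
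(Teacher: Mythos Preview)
Your inductive argument with the Bayes-rule posterior shift $P(I\mid H_t)\,T^I(y\mid H_t,a_t)=P(y\mid H_t,a_t)\,P(I\mid H_{t+1})$ is correct and yields the statement. The paper, however, proceeds differently: it first conditions on the \emph{entire} action sequence $\bar a=a_{t:t+n-1}$, observes that for a fixed $\bar a$ each instance produces a deterministic (delta) distribution over the whole block $(\bar r,\bar s,\bar o)$, and then applies the same $\E[\delta(\cdot)]=P(\cdot)$ trick from Lemma~\ref{lemma:UniformAdmissibility} once to the joint $n$-step transition; only afterwards does it integrate over actions, pulling the policy factors $\prod_j \pi(a_{t+j}\mid H_{t+j})$ outside the expectation since, with $(\bar r,\bar s,\bar o,\bar a)$ fixed as integration variables, $H_{t+j}$ does not depend on $I$. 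The paper's route is shorter and avoids both the induction and the posterior-shifting identity, exploiting directly the determinism of instances under a fixed action sequence. Your route is more modular: it never needs the full-trajectory delta, only the one-step unbiasedness plus a filtering-style update of the posterior over $I$, which would still go through in settings where instances are not fully deterministic but single-step unbiasedness holds.
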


\begin{proof}

Following the proof of Lemma \ref{lemma:UniformAdmissibility}, we observe that for any sequence of rewards, states and observations $\bar{r}=r_{t+1:t+j}, \bar{s}=s_{t+1:t+j}, \bar{o}=o_{t+1:t+j}$, and any sequence of actions $\bar{a}=a_{t:t+j-1}$, the following holds

\begin{equation}
    \begin{array}{rl}
      \E_{I\mid H_t}[p(\bar{r},\bar{s}, \bar{o}\mid H_t,\bar{a},I)]&= \E_{I\mid H_t}[\E_{i\mid H_t, I}[p(\bar{r},\bar{s}, \bar{o}\mid H_t,\bar{a},i)]],\\
      &= \E_{I\mid H_t}[\E_{i\mid H_t, I}[\delta((\bar{r}^{i},\bar{s}^{i},\bar{o}^{i})=(\bar{r},\bar{s},\bar{o})|H_t,\bar{a},i)]],\\
     &= \E_{\bar{r}^{i},\bar{s}^{i},\bar{o}^{i}}[\delta((\bar{r}^{i},\bar{s}^{i},\bar{o}^{i})=(\bar{r},\bar{s},\bar{o})|H_t,\bar{a},i)],\\
      &= T(\bar{r},\bar{s},\bar{o}\mid \bar{a},s_t) = \mathop{\Pi}_{j=0}^{n-1} T((r, s, o)_{t + j + 1}\mid s_{t + j} , a_{ t + j}).\\
    \end{array}
    \label{eq:expectationApp3}
\end{equation}
That is, the expectation across instances of the $n$-step transition of states, rewards, and observations of the instance transition model matches the underlying model for any sequence of actions.

On the other hand, if we consider a particular policy $\pi$ we have

\begin{equation}
    \begin{array}{rl}
    \E_{I\mid H_t}[p(\bar{r},\bar{s}, \bar{o}\mid H_t,\pi,I)]&=\E_{I\mid H_t}[\int\limits_{\bar{a}} \mathop{\Pi}_{n=0}^{j-1}\pi(a_{t+n}\mid H_{t+n}) p(\bar{r},\bar{s}, \bar{o}\mid H_t,\bar{a},I)],\\
    &=\int\limits_{\bar{a}} \E_{I\mid H_t} [\mathop{\Pi}_{n=0}^{j-1}\pi(a_{t+n}\mid H_{t+n}) p(\bar{r},\bar{s}, \bar{o}\mid H_t,\bar{a},I)],\\
    &=\int\limits_{\bar{a}} \mathop{\Pi}_{n=0}^{j-1}\pi(a_{t+n}\mid H_{t+n}) \E_{I\mid H_t} [ p(\bar{r},\bar{s}, \bar{o}\mid H_t,\bar{a},I)],\\
    &=\int\limits_{\bar{a}} \mathop{\Pi}_{n=0}^{j-1}\pi(a_{t+n}\mid H_{t+n}) T(\bar{r},\bar{s},\bar{o}\mid \bar{a},s_t),\\
    &=T(\bar{r},\bar{s},\bar{o}\mid H_t, \pi).\\
    \end{array}
    \label{eq:expectationApp4}
\end{equation}
On the first equality we marginalize across actions, on the second equality we exchange the integral and the expectation operator, and on the third we observe that $\mathop{\Pi}_{n=0}^{j-1}\pi(a_{t+n}\mid H_{t+n})$ is constant w.r.t. $I$. The fourth and final equality are derived from equations \ref{eq:expectationApp3} and \ref{eq:expectationApp} respectively.

This shows that the expected (across instance sets) reward, state, and observation transition matrix for any policy $\pi$ and history $H^t$ matches the true model.
\end{proof}

\textbf{Lemma \ref{lemma:UnbiasedEstimator}.  Unbiased value estimator.} Given a set $I$ of $n$ independent instances $I \sim U_\mathbb{N}^{\otimes n}$,  and policy $\pi$, we have that $\forall H^o_t$ ($t<\infty$) compatible with $I$,
\begin{equation}
\begin{array}{rl}
\E_{I\mid H^o_t}[V^I_\pi(H^o_t)] = V_\pi(H^o_t).
\end{array}
\end{equation}
\begin{proof}
We observe that
\begin{equation}
\begin{array}{rl}
\E_{I\mid H^o_t}[V^I_\pi(H^o_t)] &= \E\limits_{I\mid H^o_t}[\E\limits_{\substack{A \sim \pi \\\{R_{t+j}\}_{j=1}^{T-t}\mid I,H^o_t, \pi}}[\sum_{j=1}^{T-t}\gamma^{j-1}R_{t+j}]].
\end{array}
\end{equation}
	
By linearity of expectation, we focus on $R_{t+j}$,

\begin{equation}
\begin{array}{l}
E_{I\mid H^o_t, \pi}[R_{t+j}] = \int\limits_{R_{t+j}}R_{t+j}\sum\limits_{I} p(I\mid H^o_t) p(R_{t+j}\mid I, \pi, H^o_t),\\
\,= \int\limits_{S_{0:t}} P(S_{0:t} \mid  H^o_t) \int\limits_{ R_{t + j}}R_{t+j}\sum\limits_{I} p(I \mid  H_t) p(R_{t+j}\mid H_t,\pi,I),\\
\,= \int\limits_{S_{0:t}} p(S_{0:t} \mid  H^o_t)\int\limits_{ R_{t + j}}R_{t+j}\int\limits_{\substack{R_{t + 1}^{t + j - 1}\\(S,O)_{t + 1}^{t + j}}} \sum\limits_{I} p(I \mid  H_t) p((R , O , S)_{t + 1:t + j}\mid H_{t} , \pi, I),\\
\,= \int\limits_{S_{0:t}} p(S_{0:t} \mid  H^o_t) \int\limits_{\substack{(R,S,O)_{t + 1}^{t + j}}}R_{t+j}T((R,S,O)_{t+1:t+j}|H_t,\pi),\\
\,= \int\limits_{S_{t}} p(S_{t} \mid  H^o_t) \int\limits_{\substack{(R,S,O)_{t + 1}^{t + j}}}R_{t+j}T((R,S,O)_{t+1:t+j}|H^o_t,S_t,\pi),\\
\,= \int\limits_{R_{t+j}}R_{t+j} p(R_{t+j}|H^o_{t},\pi) = E_{R_{t+j}\mid H^o_t, \pi}[R_{t+j}].
\end{array}
\end{equation}

Here the second equality comes from marginalizing over $S_{0:t}$ and that $H_t = H^o_t \oplus S_{0:t} $, then we used Corollary \ref{corollary1} for the fourth equality. In the fifth equality we observe that the policy only depends on the observed history, and that the future trajectories depend on current state and the policy. From this result and the linearity of expectation, we recover the statement of the lemma.

\end{proof}

\textbf{Lemma \ref{lemma:nonadmiss}. State belief sub-optimality.} Given a finite set of instances $I \sim U_\mathbb{N}^{\otimes n}$ and a belief function such that $b^I(H^o_t) = p(i,\boldsymbol\tau_t|H_t^o,I),  \forall H^o_{t}$,
\begin{equation}
    \begin{array}{rl}
         \pi^I = \argmax\limits_{\pi(a\mid b^I(H^o_t))} V^I_\pi(b^I(H^o_t)) \in& \argmax\limits_{\pi(a\mid H^o_t)} V^I_\pi(H^o_t).  \\
    \end{array}
    \label{eq:appendix-belief-policyI}
\end{equation}
Moreover, a policy that depends on the generalizeable belief function $b(H^o_t)=p(s_{t}|H^o_t)$ is potentially sub-optimal for $I$. Conversely, the policy $\pi ^I$ is potentially sub-optimal for the true value function $V_{\pi}(H_t^o)$:
\begin{equation}
    \begin{array}{rl}
         \max\limits_{\pi(a\mid b^I(H^o_t))} V^I_\pi(b^I(H^o_t)) \ge& \max\limits_{\pi(a\mid b(H_t^o))} V^I_\pi(b(H_t^o)) ,\\
         \max\limits_{\pi(a\mid b(H^o_t))} V_\pi(b(H^o_t)) \ge& V_{\pi^I}(b^I(H_t^o)).
    \end{array}
    \label{eq:appendix-sub-optimality-policyI}
\end{equation}

\begin{proof}
Equation \ref{eq:appendix-belief-policyI} is a straightforward application of Lemma \ref{lemma:beliefFunction}, since from Equation \ref{eq:instanceI} we observe that $i,\boldsymbol\tau_t $ define the Markov kernel over $T^I$. 

Equation \ref{eq:appendix-sub-optimality-policyI} also follows from Lemma \ref{lemma:beliefFunction} since
\begin{equation}
    \begin{array}{rl}
         \max\limits_{\pi(a\mid b^I(H^o_t))} V^I_\pi(b^I(H^o_t)) & =\max\limits_{\pi(a\mid H^o_t)} V^I_\pi(H^o_t),  \\&\ge \max\limits_{\pi(a\mid b(H_t^o))} V^I_\pi(b(H_t^o)) ,\\
         \max\limits_{\pi(a\mid b(H^o_t))} V_\pi(b(H^o_t)) & = \max\limits_{\pi(a\mid H^o_t)} V_\pi(H^o_t),\\
         &\ge V_{\pi^I}(b^I(H_t^o)),
    \end{array}
    \label{eq:appendix-sub-optimality-policyII}
\end{equation}
where the equalities are obtained from Lemma \ref{lemma:beliefFunction} and the inequalities from set inclusion, $\{\pi: H^o_t \rightarrow \Delta^\mathcal{A}\} \supseteq \{\pi: f(H^o_t) \rightarrow \Delta^\mathcal{A}\}$.

\end{proof}

\textbf{Lemma \ref{lemma:gen}. Generalization bound on instance learning}. For any environment such that $|V_\pi(H^o_t)|\le C/2, \forall H^o_t,\pi$,  for any instance set $I$, belief function $b$, and policy function $\pi(b(H^t_o))$, we have 
\begin{equation}
    \E_{I} |V^I_\pi(\emptyset)-V_\pi(\emptyset)|\le \sqrt{\frac{2C^2}{|I|}\times \text{MI}(I,\pi\circ b)},
\end{equation}
\noindent with $\emptyset$ indicating the value of a recently initialized trajectory before making any observation.
\begin{proof}
Instances $I$ are independently sampled, since the value function $V^I_\pi$ of the model is bounded between $[-\frac{C}{2},\frac{C}{2}]$ for any policy $\pi$, then $V^I_\pi$ is $C$-subGaussian $\forall I, \pi$. The result follows from observing Lemma \ref{lemma:UnbiasedEstimator} ($\E_I[V^I_\pi(\cdot)] = V_\pi(\cdot)$) and a direct application of Theorem 1 in \cite{xu2017information}.
\end{proof}

\clearpage
\subsection{Glossary}
\label{sec:appendix_glossary}

\begin{table}[h]
\caption{\small Glossary table}
\label{table:Glossary}
\centering
\footnotesize
\begin{tabular}{lp{.2\linewidth}p{0.4\linewidth}}
\toprule
Symbol & Name & Notes \\
\midrule
$s \in \mathcal{S}$& Environment state&\\
$a \in \mathcal{A}$& Agent action &\\
$r \in \mathbf{R}$& Instantaneous reward &\\
$o \in \mathcal{O}$& Environment observation&\\
$k \in \mathcal{K}$& Observation modality & Parameter affecting observations consistently throughout the episode, but independent of state transitions (e.g.: background color, illumination conditions)\\
$\mu$ & Initial state distribution &\\
$O(o\mid s,k)$& Observation distribution & Observations in the POMDP are sampled from this distribution at each timestep\\
$T(r,s\mid a_t,s_t)$& POMDP transition matrix & $T(r,s\mid a_t,s_t) = p(r\mid a_t,s)p(s\mid a_t,s_t)$, reward only depends on current state and action\\
$H_t = \{a_{0:t-1},s_{0:t},o_{0:t},r_{1:t}\}$& History & Collection of all relevant variables throughout an episode\\
$H^o_t = \{a_{0:t-1},o_{0:t},r_{1:t}\}$ & Observable history & Collection of all variables obseved by the agent\\
$\pi(a\mid s)$ & Agent policy&  Depending on context, policy may depend on POMDP states $s$, history $H_t$, or observable history $H^0_t$\\
$\Delta^\mathcal{A}$ & Simplex over $\mathcal{A}$ actions & Set of all possible distributions over $\mathcal{A}$\\
$V_\pi(\cdot)$ & Value function & Value of policy $\pi$ conditioned on known factors $\cdot$\\
$b:H^o_t\rightarrow \mathcal{B}$ & Belief function & Function that processes observed histories\\
$\mathbb{H}(\cdot)$ & Entropy &\\
$\boldsymbol\tau^i_t: 
\mathcal{A}^{\otimes t}\rightarrow \big(\mathcal{S}^{\otimes t+1}, \mathcal{O}^{\otimes t+1}, \mathcal{R}^{\otimes t}\big)$ & Instance trajectory function & Deterministic function that characterizes an instance, assigns a history $H_t$ to any action sequence $\mathcal{A}^{\otimes t}$ (up to episode termination). Since $\boldsymbol\tau^i_t(a_{0:{t-1}})$ uniquely defines a node in the instance trajectory tree, we abuse notation to also indicate current node on the trajectory tree\\
$T_n, \bar{T}$ & Episode lengths & Duration of $n$-th episode and maximum episode length respectively\\
$\delta(a=b)$& Deterministic distribution & Assigns probability $1$ to event $a=b$, and $0$ everywhere else.\\
$T^i(r,s\mid a_t,H_t)$ & transition matrix of instance $i$ & instance  transition matrix depends on entire history $H_t$, deterministic, $T^i(r,s \mid H_t, a_t) = \delta\big((r,s)=r^i_{t+1}, s^i_{t+1}|\boldsymbol\tau^{i}_{t+1}(a_{0:t})\big)$.\\
$T^I(r,s\mid a_t,H_t)$ & transition matrix of instance set $I$ & instance set transition matrix depends on entire history $H_t$, stochasticity of the transition is a function of not knowing on which instance $i \in I$ the agent is acting on.\\
$V^I_\pi(H^o_t)$ & value function over instance set $I$ & value of policy $\pi$ conditioned on observed history $H^o_t$ and known instance set $I$\\
$MI(\cdot,\cdot)$ & Mutual information &\\

\bottomrule
\end{tabular}
\end{table}

\clearpage
\subsection{Example of state belief sub-optimality}
\label{sec:appstatebeliefsubopt}

This example illustrates how optimal policies on true states and instances can have arbitrarily large differences both in behaviours and in expected returns. We present a sequential bandit environment, with a clear optimal policy, and we show that if we sample an instance from this environment, the optimal instance-specific policy for the environment is now trajectory-dependent, instead of state-dependent. The optimal environment policy is shown to have a significantly smaller return on a typical environment instance than its instance-specific counterpart. Conversely, the instance-specific policy has lower expected return on the true, generalizeable dynamics. We note that this is an intrinsic problem to instance learning (and reusing instances in general), and that this toy example showcases the statements made in Lemma \ref{lemma:nonadmiss}.

Suppose we have a fully observable environment with a single state (bandit), and $|\mathcal{A}|$ actions, with the following state and reward transition matrix and observation function:
\begin{equation}
    \begin{array}{rl}
     T(r=1,s\mid s,a=0)&= \overline{p},\\
     T(r=0,s\mid s,a=0)&= 1-\overline{p},\\
     T(r=1,s\mid s,a\neq0)&= \underline{p},\\
     T(r=0,s\mid s,a\neq0)&= 1-\underline{p},\\
     O(o\mid s,a) &= \delta(o=s).
    \end{array}
\end{equation}

Here $\overline{p}> \underline{p}$; an episode consists of $N$ consecutive plays. It is straightforward to observe that $p(s\mid H^o_t) =\delta_s, \forall H^o_t$, and therefore the state-distribution dependent policy $\pi(a\mid p(s\mid H^0_t)$ is constant $\pi(a\mid p(s\mid H^0_t) = \pi_a \forall a, H^0_t$. Furthermore, the value of the initial observed history $H^0_t =\emptyset$ can be computed as

\begin{equation}
    \begin{array}{rl}
     V_{\pi(a\mid p(s\mid H^0_t)}(\emptyset)&= \E_{\pi(a\mid p(s\mid H^0_t))}[\sum_{j=1}^N \gamma^{j-1}R_j],\\
     &= (\pi_0\overline{p}+(1-\pi_0)\underline{p}) \sum_{j=1}^N \gamma^{j-1},\\
     &= (\pi_0\overline{p}+(1-\pi_0)\underline{p}) \frac{1-\gamma^N}{1-\gamma},\\
    \end{array}
\end{equation}

and the maximal state-dependent policy is $\pi(a\mid p(s\mid H^0_t) = \delta(a=0)$ with value $V_{\delta(a=0)}(\emptyset)=\overline{p} \frac{1-\gamma^N}{1-\gamma}$.

On the other hand, suppose we have a single instance ($|I|=1$) of this environment, with probability $(1-\overline{p})(1-(1-\underline{p})^{\mathcal{A}-1})$ each node in the instance transition tree has zero reward on the optimal arm ($a=0$), but non-zero reward on at least one sub-optimal arm. Overall, with probability $1-((1-\overline{p})(1-\underline{p})^{\mathcal{A}-1})$ each node in the instance tree has a non-zero reward action. It is thus straightforward to observe that a typical instance $i$ has an observation-dependent policy $\pi(a\mid H^0_t)$ that achieves

\begin{equation}
    \begin{array}{rl}
     \max\limits_{\pi(a\mid H^0_t)}V^I_{\pi(a\mid H^0_t)}(\emptyset)\ge (1-((1-\overline{p})(1-\underline{p})^{\mathcal{A}-1})) \frac{1-\gamma^N}{1-\gamma}.\\
    \end{array}
\end{equation}

This return can be achieved by merely checking if the node in the instance tree the agent is on has any non-zero-reward action and selecting one of those at random. An illustration of the state dynamics of the environment versus the transition tree of the instance is shown on Figure \ref{fig:exampleAppendix}.

\begin{figure}[h]%
\centering
\includegraphics[width=.8\linewidth]{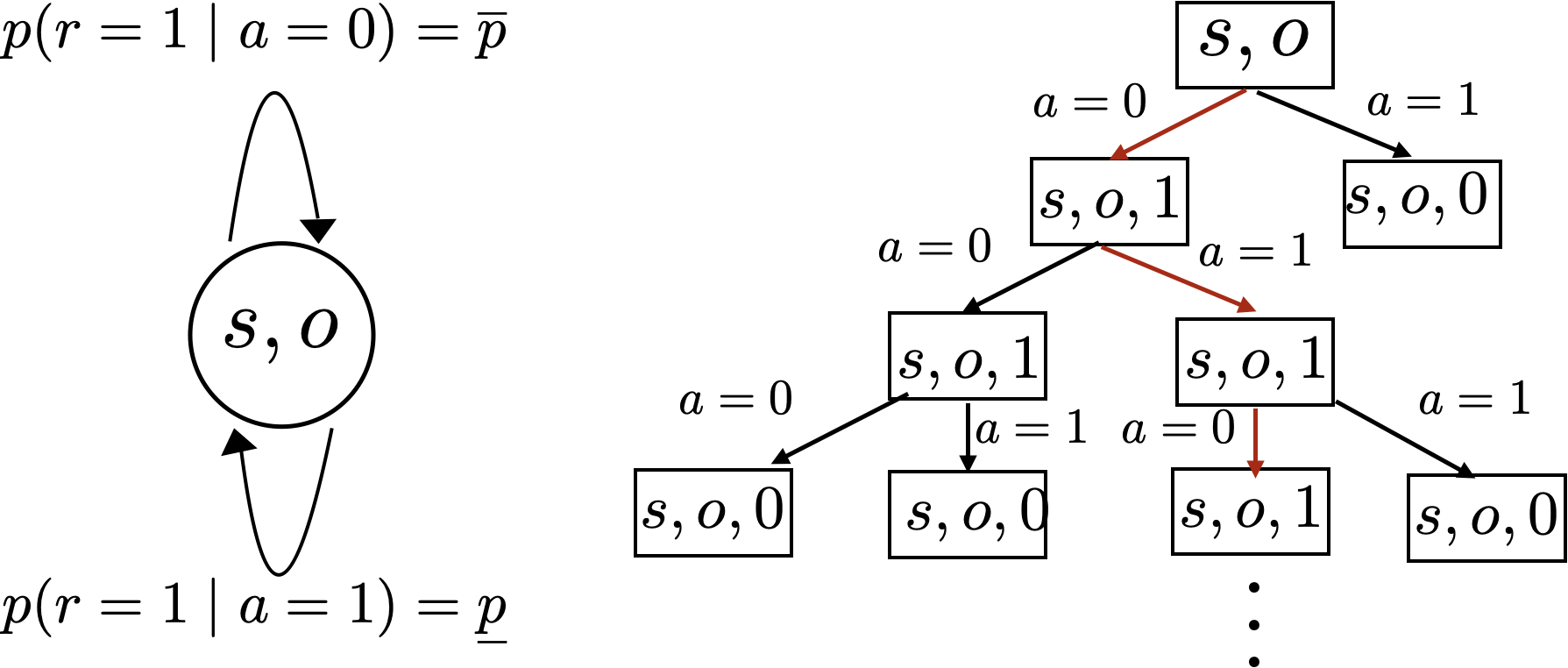}
  \caption{Left: Transition model of a bandit with two actions. Right: transition tree of a given instance of the environment, the optimal action sequence is shown in red. Note that the optimal \textit{instance-specific} policy takes different actions for observed histories with the same $p(s\mid H^0_t)$. }%
 \label{fig:exampleAppendix}%
\end{figure}

Notice that as stated in Lemma \ref{lemma:nonadmiss}, we have that the optimal \textit{generalizeable} policy is sub-optimal on the instance set $I$, and vice-versa. Furthermore, for large action spaces $|\mathcal{A}|$, the instance-specific policy $\bar{\pi} =\argmax\limits_{\pi(a\mid H^0_t)}V^I_{\pi(a\mid H^0_t)}$ has an expected value on the instance set of $V^I_{\bar{\pi}}(\emptyset)=\frac{1-\gamma^N}{1-\gamma}$, but can be arbitrarily close to the worst possible return on the true environment $ V_{\bar{\pi}}(\emptyset) \simeq \underline{p}\frac{1-\gamma^N}{1-\gamma}$. This undesired behaviour arises from a mismatched objective, where we want our policy to maximize expected reward on the model dynamics, but we instead provide instance-specific dynamics that might have different optimal policies.

\subsection{Implementation details}
\label{sec:Techniques}
The implementation details of the proposed instance agnostic policy ensembles method (IAPE) whose objective was described in Equation \ref{eq:objective} are presented next. We describe the importance weighting technique used in order to leverage the experience acquired with the consensus policy to compute instance-specific policies and values. We then provide details about the architectures and hyperparameters.

Here we focus on Off-policy Actor-Critic (AC) techniques because we wish to make use of trajectories collected under one policy to improve another, we do this via modified importance weighting (IW). We use policy gradients for policy improvements, and similarly to \cite{espeholt2018impala,precup2001off,munos2016safe}, we modify the $n$-step bootstrapped value estimate for the off-policy case to estimate policy values.

Consider an observed trajectory $H^o_t$ collected using the consensus policy $\bar{\pi}$ on instance  $i$ belonging to instance subset $I_m$. We use clipped IW to define the value target for policy $\pi_m$ at time $\tau$ as
\begin{equation}
\begin{array}{rl}
    g^m_\tau  & \defeq  \sum\limits_{t=\tau}^{\tau+n-1} \gamma^{t-\tau} w_{m,\tau}^{t} r_{t+1} + \gamma^n w_{m,\tau}^{t+n} \hat{V}_{\pi_m}(b_{\theta,\tau+n}|H^o_{\tau+n}),\\
    w_{m,\tau}^{t}  & \defeq  \text{clip}(\mathop{\Pi}\limits_{j=\tau}^{t} \frac{\pi_m(a_j\mid b_j)}{\bar{\pi}(a_j\mid b_j)},\underline{w}, \bar{w}),\\
\end{array}
\label{eq:detrace}
\end{equation}
where $\underline{w}\le 1\le \bar{w}$ define the minimum and maximum importance weights for the partial trajectory; this clipping is used as a variance reduction technique. The dependencies $g^m_\tau= g^m_\tau(H^o_t), b_{\tau}=b(H^0_\tau)$, are omitted for brevity. Note that we clip the cumulative importance weight of the trajectory, since it is well reported that clipping $\frac{\pi_m(a_j\mid b_j)}{\hat{\pi}(a_j\mid b_j)}$  individually leads to high variance estimates (see \cite{munos2016safe}). 

This clipping technique leads to the exact IW estimate for likely trajectories, and is equivalent to the on-policy $n$-step bootstrap estimate when both policies are identical.  Using this estimator, the value target (critic) and policy (actor) losses for this sample are
\begin{equation}
\begin{array}{rl}
    l_{V_m}(H^0_t,\tau) &= \norm2{\mathbf{{\hat{V}_{\pi_m}(b_{\theta,\tau}|H^o_{\tau})}}-g_\tau},\\
    l_{\pi_m}(H^0_t,\tau) &= - \log(\mathbf{{\pi_m(a_j\mid b_j)}}) \frac{\pi_m(a_j\mid b_j)}{\bar{\pi}(a_j\mid b_j)} (r_{\tau+1} + \gamma g^m_{\tau+1} -V_{\pi_m \circ b}(b_t)),
\end{array}
\end{equation}
where the policy gradient also requires an importance weight similar to \cite{espeholt2018impala}. The bolded terms are the only gradient propagating terms in the loss. The full training loss for the model is computed as
\begin{equation}
\begin{array}{rl}
    L &= \E_{I_m\mid I}\E_{H^0_t,\tau\mid I_m,\hat{\pi}}[\frac{1}{2}l_{V_m}(H^0_t,\tau) +l_{\pi_m}(H^0_t,\tau) ] + \lambda_{\text{reg}}\norm2{\theta, \{\phi_m\}, \{\psi_m\},},
\end{array}
\end{equation}
where $\lambda_{\text{reg}}$ is a prior over the network weights. Note that all instance-specific parameters only receive gradient updates from their own instance set.

All experiments use the Impala-CNN architecture \cite{espeholt2018impala} for feature extraction, these features are concatenated with a one-hot encoding of the previous action, and fed into a $256$-unit LSTM, policy and value functions are implemented as single dense layers. Parameters for all experiments are shown in Table \ref{table:hyperparams}.

\begin{table}[h!]

\caption{\small Hyperparameter table.}
\label{table:hyperparams}
\centering
\small
\begin{tabular}{lcccc|c}
\toprule
{Method} &base & $\ell^2$ & $\ell^2$-CO & IAPE  & $\infty$-levels  \\
\midrule
$\gamma$ &$.99$&$.99$&$.99$&$.99$&$.99$\\
bootstrap rollout length &$256$ &$256$ &$256$ &$256$ &$256$ \\
$\underline{w}$  &$2$ &$2$ &$2$ &$2$ &$2$\\
$\overline{w}$  &$\frac{1}{2}$  &$\frac{1}{2}$  &$\frac{1}{2}$  &$\frac{1}{2}$  &$\frac{1}{2}$\\
minibatch size &$8$ &$8$ &$8$ &$8$ &$8$ \\
ADAM learning rate &$2\times 10^{-4}$ &$2\times 10^{-4}$ &$2\times 10^{-4}$ &$2\times 10^{-4}$ &$2\times 10^{-4}$\\
$\ell_2$ penalty &$0$ &$2\times 10^{-5}$ &$2\times 10^{-5}$ &$2\times 10^{-5}$ &$2\times 10^{-5}$\\
\# of training levels &$500$ &$500$ &$500$ &$500$ &$\infty$\\
uses Cutout &No &No &Yes &No &No\\
uses Batchnorm &Yes &Yes &Yes &Yes &Yes\\
\# of ensembles &- &- &- &$10$ &-\\
\bottomrule
\end{tabular}
 \vspace{-1.2em}
\end{table}

\clearpage
\subsection{Extended results}
\label{sec:appendix_results}
Figures \ref{fig:extra_result_train} and \ref{fig:extra_result_test} show an extended version of the results presented in Table \ref{table:OverallTable}. Figure \ref{fig:extra_result_train}.a shows the empirical distribution of the time-to-reward difference on successful training instances w.r.t the base policy ($\Delta T_{base}|R=10$) for each method. In most cases these distributions are positively skewed, indicating that the obtained policies tend to be slower than the baseline on training levels. This is considerably noticeable for the $\infty$-level policy. Figure \ref{fig:extra_result_train}.b presents the distribution of the per-training-instance KL-divergence between the time-averaged policy of each method and the one obtained on the unbounded training levels ($D^i_{kl}({\pi_{\infty}}|{\pi})$). The IAPE method has the most concentrated distribution out of all the methods, while the base method has the most disperse one. This observation is supported by Figure \ref{fig:extra_result_train}.c where we see the distribution of the per-training-instance average policies, the base policy is noticeably different from the $\infty-$level policy. Figure \ref{fig:extra_result_test} shows the same plots for test levels, here the difference in $\Delta T_{base}|R=10$ is less significant across methods.

\begin{figure}[h!]
    \centering
     \includegraphics[width=\linewidth]{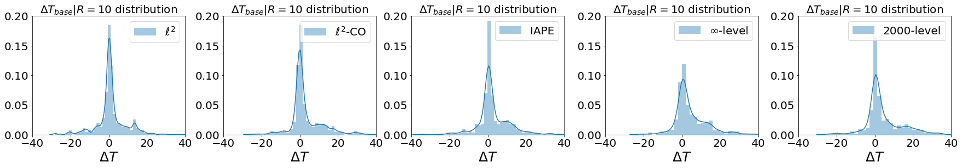}
     \caption*{a) }
    \centering
     \includegraphics[width=\linewidth]{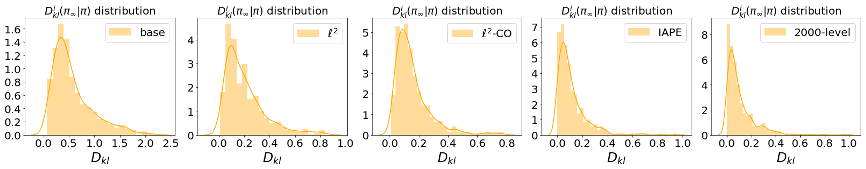}
     \caption*{b)}
    \includegraphics[width=\linewidth]{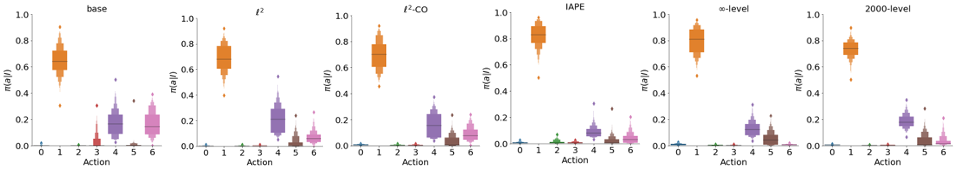}
    \caption*{c)}
    \caption{Results on training instances. a) Time-to-reward difference on successful training instances w.r.t the base policy. b) KL-divergence between the time-averaged policy per-training-instance of each method and the $\infty$-level method. c) Distribution of the per-training-instance average policies}
    \label{fig:extra_result_train}
\end{figure}

\begin{figure}[h!]
    \centering
     \includegraphics[width=\linewidth]{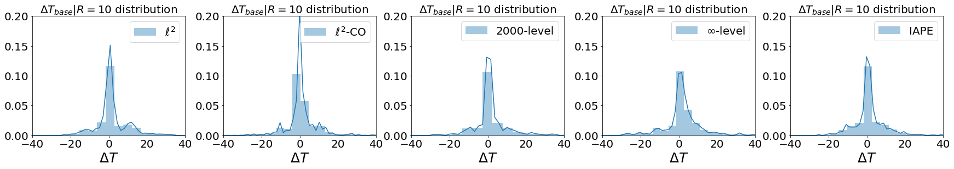}
     \caption*{a)}
    \centering
     \includegraphics[width=\linewidth]{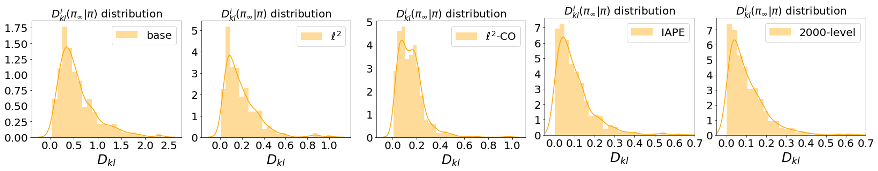}
     \caption*{b)}
     \includegraphics[width=\linewidth]{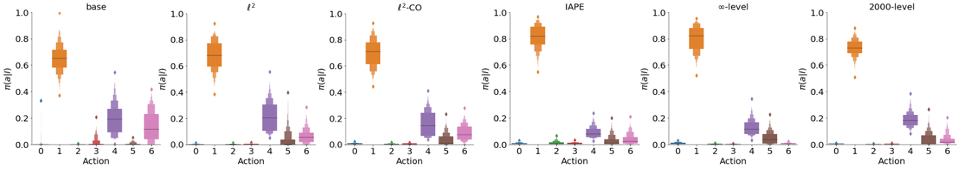}
     \caption*{c)}
    \caption{Results on 500 Test instances. a) Time-to-reward difference on successful test instances w.r.t the base policy. b) KL-divergence between the time-averaged policy per-test-instance of each method and the $\infty$-level method. c) Distribution of the per-test-instance average policies}
    \label{fig:extra_result_test}
\end{figure}

\end{document}